\PassOptionsToPackage{table}{xcolor}
\documentclass[sigconf,nonacm]{acmart}
\AtBeginDocument{%
  }

\usepackage{booktabs}
\usepackage{multirow}
\usepackage{tabularx} 
\usepackage{threeparttable} 
\usepackage{xcolor} 
\usepackage{svg}
\usepackage{subcaption}
\usepackage{enumitem} 
\usepackage{algorithm}
\usepackage{algpseudocode}
\usepackage{caption}

\usepackage[utf8]{inputenc}
\usepackage[T1]{fontenc}

\begin{document}

\title{D$^3$-Subsidy: Online and Sequential Driver Subsidy Decision-Making for Large-Scale Ride-Hailing Market}
\titlenote{Accepted at the 32nd ACM SIGKDD Conference on Knowledge Discovery and Data Mining (KDD 2026). This is an arXiv author version.}

\author{Taijie Chen}
\email{ctj21@connect.hku.hk}
\orcid{0000-0003-4504-9994}
\affiliation{%
  \institution{University of Hong Kong}
  \city{Hong Kong}
  \country{China}
}

\author{Rui Su}
\email{2022111220@stu.hit.edu.cn}
\orcid{0009-0002-9282-4586}
\affiliation{%
  \institution{Harbin Institute of Technology}
  \city{Harbin}
  \state{Heilongjiang}
  \country{China}
}

\author{Siyuan Feng}
\authornote{Corresponding author}
\email{siyuan.feng@polyu.edu.hk}
\affiliation{%
  \institution{Hong Kong Polytechnic University}
  \city{Hong Kong}
  \country{China}
}

\author{Laoming Zhang}
\email{adamzhang@didiglobal.com}
\affiliation{%
  \institution{Didi International Business Group}
  \city{Beijing}
  \country{China}
}

\author{Hongyang Zhang}
\email{hong-yang.zhang@connect.polyu.hk}
\affiliation{%
  \institution{Hong Kong Polytechnic University}
  \city{Hong Kong}
  \country{China}
}

\author{Haijiao Wang}
\email{wanghaijiao@didiglobal.com}
\affiliation{%
  \institution{Didi International Business Group}
  \city{Beijing}
  \country{China}
}

\author{Zhaofeng Ma}
\email{mazhaofeng@didiglobal.com}
\affiliation{%
  \institution{Didi International Business Group}
  \city{Beijing}
  \country{China}
}

\author{Jintao Ke}
\authornotemark[1]
\email{kejintao@hku.hk}
\affiliation{%
  \institution{University of Hong Kong}
  \city{Hong Kong}
  \country{China}
}

\author{Li Ma}
\email{malimarey@didiglobal.com}
\affiliation{%
  \institution{Didi International Business Group}
  \city{Beijing}
  \country{China}
}

\renewcommand{\shortauthors}{Taijie Chen et al.}

\begin{abstract}
Ride-hailing platforms like Didi Chuxing operate in highly dynamic environments where balancing driver supply and passenger demand is critical. Although driver-side subsidies serve as a primary lever to align these forces and improve key KPIs like completed rides (\texttt{Rides}) and gross merchandise value (\texttt{GMV}), optimizing them in production requires simultaneously meeting three constraints: (i) responsiveness to stochastic shocks, (ii) strict subsidy-rate caps, and (iii) low-latency execution at city scale. These requirements rule out expensive per-order optimization, calling for a forward-looking, constraint-aware city-level controller for online sequential decision making.
To meet these requirements, we introduce D$^3$-Subsidy (Dynamic Driver-side Diffusion-based Subsidy), a hierarchical diffusion-based framework for deployable city-wide subsidy control. To bridge the train-inference gap, D$^3$-Subsidy employs a prefix-conditioned diffusion model that samples plausible future trajectories from immutable historical observations, ensuring the training protocol aligns with the fixed-history nature of online deployment. These generated plans are then decoded by a context-conditioned inverse module into low-dimensional city-level control signals. For scalable execution, we bridge the gap between city-level planning and fine-grained dispatch via a Lagrangian-dual-derived mapping, which embeds subsidy-rate caps directly into order-driver incentives without iterative optimization. Additionally, a multi-city pretraining strategy with parameter-efficient fine-tuning enables robust transfer across heterogeneous cities. Extensive offline evaluations demonstrate that D$^3$-Subsidy improves \texttt{Rides} and \texttt{GMV} while enhancing cap compliance, and a real-world A/B test confirms significant uplift while keeping budget-related violation metrics within operational thresholds.
\end{abstract}


\begin{CCSXML}
<ccs2012>
   <concept>
       <concept_id>10010147.10010257.10010258.10010261.10010272</concept_id>
       <concept_desc>Computing methodologies~Sequential decision making</concept_desc>
       <concept_significance>500</concept_significance>
       </concept>
   <concept>
       <concept_id>10010405.10010481.10010485</concept_id>
       <concept_desc>Applied computing~Transportation</concept_desc>
       <concept_significance>500</concept_significance>
       </concept>
 </ccs2012>
\end{CCSXML}

\ccsdesc[500]{Computing methodologies~Sequential decision making}
\ccsdesc[500]{Applied computing~Transportation}

\keywords{Ride-hailing, Driver Subsidy, Diffusion, Reinforcement Learning}


\maketitle

\section{Introduction}
\label{sec:intro}
Ride-hailing platforms such as Uber and Didi Chuxing are a core part of urban mobility, matching millions of passengers and drivers in rapidly changing spatiotemporal environments~\cite{qin2022reinforcement}. A persistent operational challenge is ensuring sufficient driver participation: due to heterogeneous preferences and opportunity costs, many drivers are reluctant to accept low-attractiveness requests, which can trigger severe supply--demand imbalance and degraded service quality~\cite{chen2025grab}. Driver-side subsidies are therefore widely used to stimulate acceptance and completion, improving platform-level objectives such as completed rides (\texttt{Rides}) and gross merchandise value (\texttt{GMV})~\cite{yuan2021real}.

Subsidies, however, must be allocated under strict profitability constraints. In practice, platforms impose a global budget over a long horizon (e.g., one day), often expressed as a cap on the subsidy rate relative to \texttt{GMV}. This makes subsidy allocation inherently online and sequential: a decision at the current time window affects not only immediate driver supply but also the remaining budget, thereby constraining future decisions. The goal is to learn a policy that maximizes cumulative platform utility over the day while respecting the budget cap.

A further complication is scalability. Optimizing a subsidy for each individual order--driver pair in real time is computationally prohibitive at city scale, and overly personalized incentives can raise fairness and consistency concerns among drivers facing similar conditions. As a result, industrial systems commonly deploy a two-stage mechanism: the platform first decides a city-level subsidy intensity $\lambda_t$ for each time window $t$, and then deterministically maps $\lambda_t$ to pair-level incentives (e.g., as a function of order value and pickup distance) via a Lagrangian-dual-derived rule. This design offers low-latency execution and a transparent, consistent incentive structure.

Even with this low-dimensional action space, learning an effective controller is challenging. The ride-hailing market is stochastic and non-stationary, and standard reinforcement learning (RL) approaches can be unstable and data-hungry in such settings~\cite{guo2024generative}. More importantly, the trial-and-error exploration required by online RL is misaligned with strict budget caps: aggressive exploration may cause irreversible loss, premature budget exhaustion, or operational violations. These constraints motivate offline learning, where policies are learned purely from historical logs.

In this work, we draw on recent advances in generative modeling and propose D$^3$-Subsidy (Dynamic Driver-side Diffusion-based Subsidy), a diffusion-based framework for safe and deployable city-level subsidy control. To bridge the train--inference gap in sequential decision making, we introduce a prefix-conditional diffusion scheme that learns to generate plausible future city trajectories conditioned on an observed history prefix. To explicitly promote budget feasibility, we incorporate a constraint-aware score that penalizes infeasible trajectories. We further propose a context-conditioned inverse dynamics module to decode the control signal $\lambda_t$ under specific operational contexts, and adopt multi-city pretraining with parameter-efficient fine-tuning for scalable transfer across heterogeneous cities. Our main contributions are summarized as follows:
\begin{itemize}[leftmargin=*, itemsep=0pt,topsep=2pt]
\item We formulate city-level driver subsidy allocation as a budget-constrained online sequential decision-making problem, capturing the coupling between current incentives, future feasibility, and platform-level KPIs under market stochasticity.
\item We propose D$^3$-Subsidy, a diffusion-based offline control framework featuring (i) prefix-conditional diffusion for deployment-consistent trajectory generation, (ii) a constraint-aware score for budget-feasible planning, and (iii) a context-conditioned inverse dynamics decoder for accurate and controllable action inference.
\item We evaluate our method on real-world data from three cities and validate it in production A/B tests, achieving a 1.59\% increase in \texttt{Rides} and a 2.06\% improvement in \texttt{GMV} under the same budget constraints.
\end{itemize}

\section{Preliminary}

\subsection{Operations in Ride-hailing Platforms}
\label{app:ride-hailing}
In a ride-hailing platform, operational decisions arise from the continuous interactions among passengers, drivers, and the platform. When a passenger submits a trip request, the platform first assesses the local supply-demand conditions (e.g., nearby driver availability and anticipated demand) and then determines a driver-side subsidy to influence driver participation. This subsidy is offered to nearby drivers as additional compensation for accepting and completing the order, thereby shaping drivers' acceptance decisions based on perceived profitability and travel distance \cite{chen2025grab}.
Given drivers' responses to the subsidy offer, the platform proceeds to match the order to one or more candidate drivers and updates system states accordingly (e.g., driver availability, spatial distribution of demand). This closed-loop process repeats over time and across regions, forming a large-scale, dynamic marketplace in which subsidy interventions serve as a key lever for steering driver behavior and improving driver-passenger matching outcomes.

\begin{figure}
    \centering
    \includegraphics[width=1\linewidth]{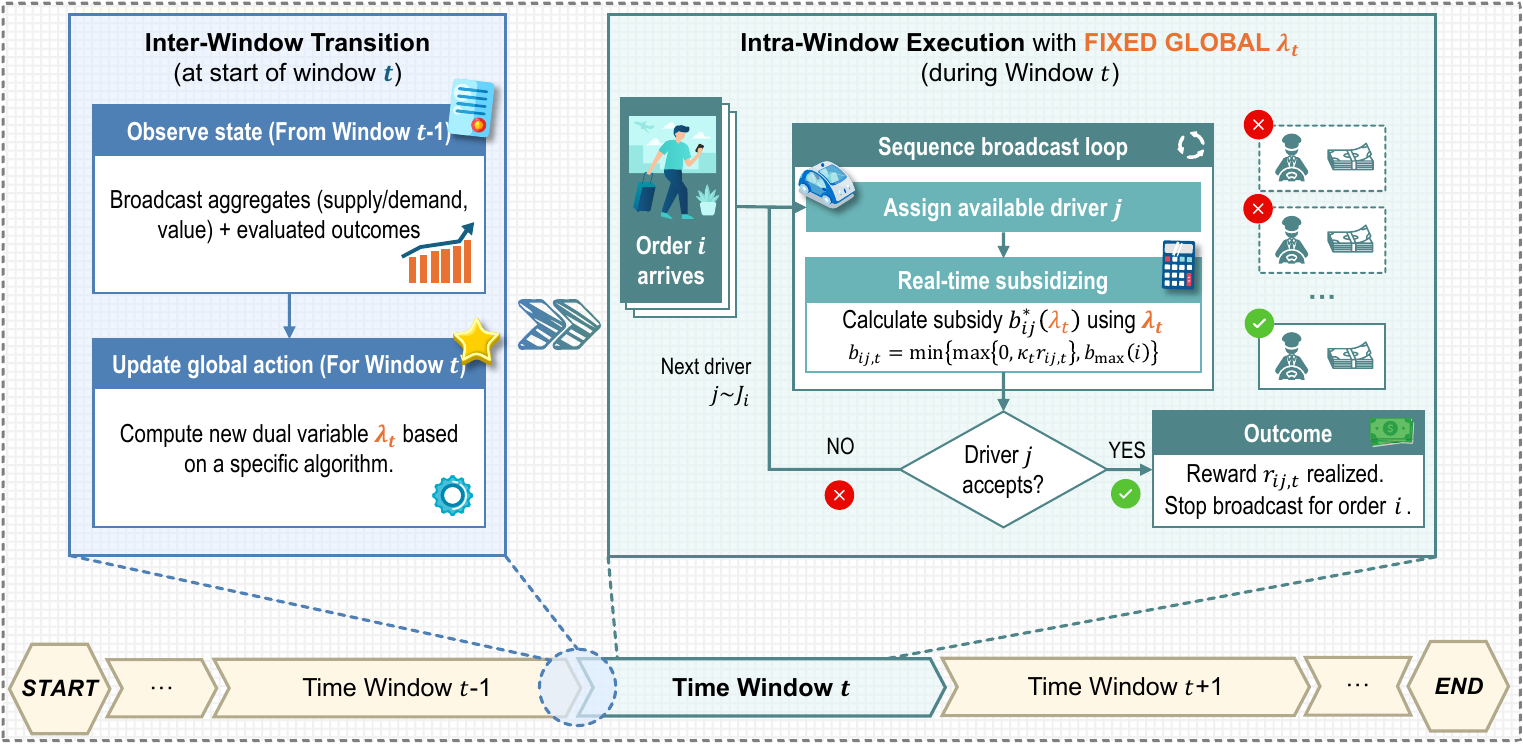}
    \caption{Problem Formulation}
    \Description{A diagram illustrating the problem formulation.}
    \label{fig:problem_formulation}
\end{figure}

\subsection{Problem Formulation} \label{sec:problem-formulation}

We consider a broadcast-based order dispatching system \cite{chen2025dynamic}. When an order $i$ is broadcast to a candidate driver $j$, the platform may offer a monetary subsidy $b_{ij}$ to incentivize acceptance and completion. Each subsidy is bounded by an order-specific cap:
\begin{equation}
0 \le b_{ij} \le b_{\max(i)}, \qquad \forall\, i,j.
\end{equation}

Let $p_{ij}(b_{ij})$ denote the probability that driver $j$ completes order $i$ under subsidy $b_{ij}$, and let $r_{ij}$ be the platform revenue obtained upon completion. To ensure sustainable operations, the platform enforces a global daily subsidy-rate constraint: total subsidy spending should not exceed a fraction $C\in(0,1)$ of total revenue. We allow a tolerance $\delta\ge 0$ to accommodate demand volatility and measurement noise. This tolerance is specified by the platform and kept fixed across all experiments. In expectation, the constraint can be written as
\begin{equation}
\frac{\sum_{i,j} b_{ij}\, p_{ij}(b_{ij})}
     {\sum_{i,j} r_{ij}\, p_{ij}(b_{ij})}
\le C + \delta.
\end{equation}

The platform's objective is to decide subsidies $\{b_{ij}\}$ to maximize expected total revenue subject to the subsidy-rate constraint and box constraints:
\begin{equation}
\label{eq:primal-general}
\begin{aligned}
\max_{b_{ij}} \quad &
\sum_{i,j} r_{ij}\, p_{ij}(b_{ij}) , \\
\text{s.t.}\quad &
\sum_{i,j} b_{ij}\, p_{ij}(b_{ij})
-
(C + \delta) \sum_{i,j} r_{ij}\, p_{ij}(b_{ij})
\le 0, \\
& 0 \le b_{ij} \le b_{\max(i)}, \qquad \forall i,j.
\end{aligned}
\end{equation}

\paragraph{Dual relaxation and closed-form solution.}
To obtain a deployable mapping with analytical transparency, we adopt a locally linear approximation of the completion probability with respect to incentives. Consistent with empirical evidence that completion likelihood increases with incentives~\cite{chen2025grab}, we use
\begin{equation}
\label{eq:linear_completion}
p_{ij}(b_{ij}) \approx a_{ij}\, b_{ij}, \qquad a_{ij} > 0,
\qquad 0 \le b_{ij}\le b_{\max(i)}.
\end{equation}
Here $a_{ij}$ is a function of observable order- and context-level characteristics (e.g., pickup distance and order value) and does not depend on driver-identity features. Therefore, any two driver--order pairs with the same observable characteristics share the same $a_{ij}$, yielding a uniform, non-discriminatory subsidy rule (i.e., no driver-specific differentiated incentives). In practice, $p_{ij}(b)$ is bounded in $[0,1]$ and can be modeled by a monotone response such as a logit specification. Since any smooth monotone function on $[0,b_{\max}]$ can be uniformly approximated by a piecewise-linear function, the dual-based mapping derived below extends to such nonlinear response models by replacing Equation~\eqref{eq:linear_completion} with its piecewise-linear approximation, leading to a piecewise closed-form policy with the same clipping structure.

Introducing a Lagrange multiplier $\lambda\ge 0$ for the subsidy-rate constraint, the Lagrangian of Equation~\eqref{eq:primal-general} becomes
\begin{equation}
\label{eq:lagrangian-linear}
\mathcal{L}(b,\lambda)
=
\sum_{i,j}
\Big[
(1 + \lambda (C+\delta))\, r_{ij} a_{ij} b_{ij}
- \lambda a_{ij} b_{ij}^2
\Big].
\end{equation}

Because $\mathcal{L}(b,\lambda)$ is separable across pairs $(i,j)$, the inner maximization admits a closed-form solution. Define $\kappa = (C + \delta + 1/\lambda)/2$. The optimal subsidy is
\begin{equation}
\label{eq:b-optimal-clipped}
b_{ij}^{*}(\lambda)
=
\min\Big\{
\max\big\{0,\ \kappa r_{ij}\big\},
\ b_{\max(i)}
\Big\}.
\end{equation}
Equation~\eqref{eq:b-optimal-clipped} shows that the pair-level subsidy is proportional to reward $r_{ij}$, scaled by a global dual variable that reflects the tightness of the daily budget.

\begin{figure*}[!tb] 
\centering 
\includegraphics[width=0.85\linewidth]{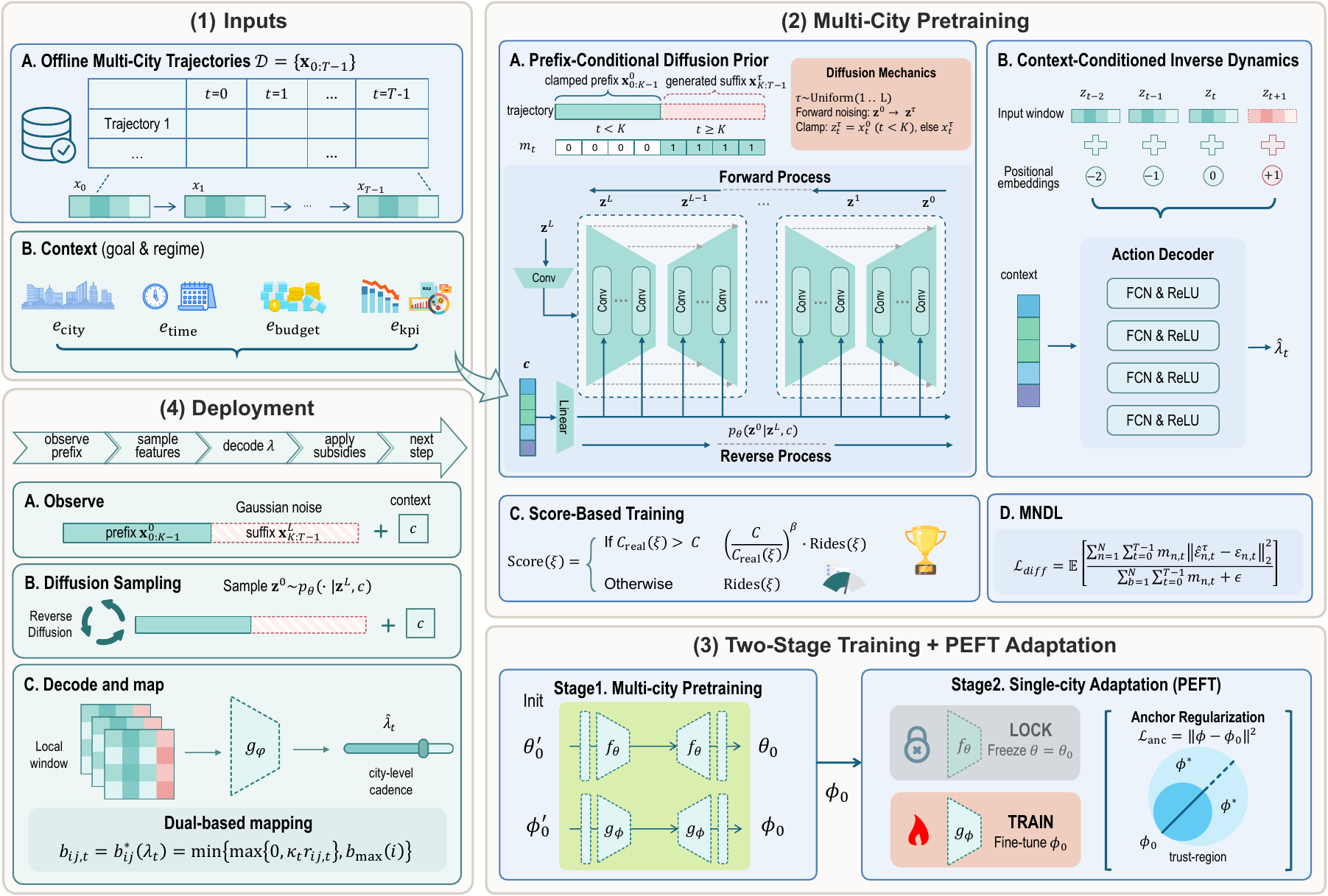} 
\caption{Overview of the proposed D$^3$-Subsidy framework.}
\label{fig:model_structure}
\end{figure*}

\subsection{Sequential Subsidy Decision-Making}
\label{sec:mdp_subsidy}

Equation~\eqref{eq:b-optimal-clipped} specifies a tractable pair-level rule given a dual variable, but directly optimizing $b_{ij}$ for millions of order--driver pairs in real time is still impractical. In production, platforms therefore update a single city-level control signal $\lambda_t$ at a coarse cadence (e.g., every 2 minutes) and apply the induced incentives within each time window. We model this online control process as a Markov Decision Process (MDP).

\paragraph{State and action.}
Let $s_t\in\mathcal{S}$ denote the observable market state at time window $t$
(e.g., supply--demand gap and spatial distribution). To track constraint
satisfaction over the day, we augment the state with the realized subsidy
rate w.r.t. \texttt{GMV}:
\begin{equation}
\rho_t
=
\frac{\sum_{t'\le t}\sum_{(i,j)\in\mathcal{E}_{t'}} b_{ij,t'}\, y_{ij,t'}}
     {\sum_{t'\le t}\sum_{(i,j)\in\mathcal{E}_{t'}} g_{ij,t'}\, y_{ij,t'}},
\end{equation}
where $\mathcal{E}_{t'}$ is the set of broadcasted order--driver pairs in period $t'$,
$y_{ij,t'}\in\{0,1\}$ indicates whether order $i$ is completed by driver $j$,
and $g_{ij,t'}$ denotes the \texttt{GMV} of pair $(i,j)$ if completed.
The augmented state is $x_t=(s_t,\rho_t)$, and the action is the scalar city-level control $\lambda_t$.

\paragraph{From city-level control to pair-level subsidies.}
Given $\lambda_t$, the platform deterministically computes incentives using Equation~\eqref{eq:b-optimal-clipped}:
\begin{equation}
\label{eq:subsidy_mapping_mdp}
b_{ij,t} = \min\Big\{ \max\big\{0,\ \kappa_t r_{ij,t}\big\}, \ b_{\max(i)} \Big\}.
\end{equation}
This mapping serves as a low-dimensional interface, converting one scalar decision into fine-grained subsidies at scale.

\paragraph{Reward and dynamics.}
The period reward $r_t$ aggregates the target KPI (e.g., \texttt{Rides} or \texttt{GMV}) over the time window. After executing $\lambda_t$ and observing outcomes $\{y_{ij,t}\}$, the realized subsidy rate $\rho_t$ is updated accordingly, and the system transitions to $x_{t+1}$ under stochastic market dynamics $\mathcal{P}(\cdot\mid x_t,\lambda_t)$. The platform seeks a policy $\pi(\lambda_t\mid x_t)$ that maximizes long-term performance while keeping $\rho_t$ within the subsidy-rate cap.

\section{D$^3$-Subsidy}
\label{sec:d3}

\subsection{Overview}
\label{sec:d3_overview}
Building on the sequential decision formulation in Section~\ref{sec:mdp_subsidy}, we propose
D$^3$-Subsidy, a diffusion-based offline decision framework for platform-controlled, city-level driver-side subsidy control.
In line with industrial practice, the platform updates a single city-level control variable $a_t=\lambda_t$ at a coarse cadence, and then induces fine-grained pair-level subsidies through the dual-based mapping in Equation~\eqref{eq:b-optimal-clipped}.

Given offline trajectories collected from multiple cities, D$^3$-Subsidy learns a transferable and controllable decision prior that can be deployed under heterogeneous operating regimes (e.g., different subsidy-rate tightness and KPI preferences).
The framework consists of two coupled modules: (i) a prefix-conditional diffusion model that generates plausible future city trajectories conditioned on an observed history prefix, and (ii) a context-conditioned inverse dynamics model that decodes the city-level control $\lambda_t$ from short state windows under the same context.
To improve feasibility under strict subsidy-rate caps, we further introduce a constraint-aware scoring objective, and adopt a two-stage training pipeline (multi-city pretraining followed by single-city adaptation) to balance generalization and local calibration.

\subsection{Prefix-Conditional Diffusion Model}
\label{sec:d3_diffusion_model}
To avoid ambiguity, we use the subscript $t\in\{0,\ldots,T-1\}$ to denote the index of the time window, and the superscript $\tau\in\{0,1,\ldots,L\}$ to denote the diffusion step. In particular, $\tau=0$ denotes a clean (undiffused) trajectory, while larger $\tau$ indicates higher noise.

We represent each city-day as a length-$T$ clean trajectory
$\xi=\mathbf{x}_{0:T-1}^{0}=(x_0^{0},\ldots,x_{T-1}^{0})$, where each state is
\begin{equation}
x_t^{0} = [\, s_t \,;\, \rho_t \,]\in \mathbb{R}^{d}.
\end{equation}
Here $s_t$ summarizes the observable market condition and $\rho_t$ is the realized subsidy rate defined in Section~\ref{sec:mdp_subsidy}. Each trajectory is paired with a context vector $\mathbf{c}$ that encodes spatial information, temporal semantics, the budget regime, and the target \texttt{Rides}.

Given an observed history prefix $\mathbf{x}^{0}_{0:K-1}$, our goal is to learn a conditional generative prior over the future suffix:
\begin{equation}
p_\theta\!\left(\mathbf{x}_{K:T-1}^{0}\mid \mathbf{x}_{0:K-1}^{0},\mathbf{c}\right).
\end{equation}
Crucially, the prefix corresponds to realized history and must remain immutable. Therefore, in both training and inference, we treat the prefix as a hard condition and diffuse/denoise only the suffix. As illustrated in Figure~\ref{fig:pc_diffusion_concept}, we clamp the realized history prefix and apply diffusion-based denoising/sampling only to the future suffix.

We define a single diffusion variable $\mathbf{z}^{\tau}=\{z_t^{\tau}\}_{t=0}^{T-1}$ over the entire horizon:
\begin{equation}
\begin{aligned}
z_t^{\tau}
&\triangleq
\begin{cases}
x_t^{0}, & t<K,\\
x_t^{\tau}, & t\ge K,
\end{cases}\\
\mathbf{z}^{\tau}
&\triangleq
(\mathbf{x}_{0:K-1}^{0},\ \mathbf{x}_{K:T-1}^{\tau}).
\end{aligned}
\end{equation}
where $\mathbf{x}_{K:T-1}^{\tau}$ denotes the noised suffix at diffusion step $\tau$. Thus, the prefix is enforced for all $\tau$, while only the suffix evolves along the diffusion chain.
We write $\alpha_\tau=1-\beta_\tau$ and $\bar{\alpha}_\tau=\prod_{i=1}^{\tau}\alpha_i$.

\subsubsection{Forward process} We adopt the DDPM parameterization~\cite{ho2020ddpm} with a cosine noise schedule~\cite{nichol2021improved}, but restrict noising to the suffix.
Formally, the forward distribution over the entire $\mathbf{z}^{\tau}$ factorizes as
\begin{equation}
\label{eq:forward_z_factor}
q(\mathbf{z}^{\tau}\mid \mathbf{x}^{0},K)
=
\prod_{t=0}^{K-1}\delta\!\left(z_t^{\tau}-x_t^{0}\right)
\;\cdot\;
\prod_{t=K}^{T-1}\mathcal{N}\!\left(
z_t^{\tau};\ \sqrt{\bar{\alpha}_\tau}\,x_t^{0},\ (1-\bar{\alpha}_\tau)I_d
\right),
\end{equation}
where $\delta(\cdot)$ denotes the Dirac delta distribution, which places all
its mass at $z_t^{\tau}=x_t^{0}$ and thus leaves the prefix tokens
$t\in[0,K\!-\!1]$ unchanged (i.e., no noise is added to the prefix), and $I_d$
is the $d\times d$ identity matrix.
Equivalently, for each $t\ge K$ we can sample
\begin{equation}
\label{eq:forward_suffix_sample}
z_{t}^{\tau}
=
\sqrt{\bar{\alpha}_\tau}\,x_{t}^{0}
+
\sqrt{1-\bar{\alpha}_\tau}\,\varepsilon_t,
\qquad
\varepsilon_t\sim\mathcal{N}(0,I_d),
\end{equation}
and set $z_t^{\tau}=x_t^{0}$ for all $t<K$ deterministically.

\subsubsection{Reverse process} At inference time, we initialize the suffix as
$\{z_t^{L}\}_{t=K}^{T-1}\sim\mathcal{N}(0,I_d)$ and clamp the prefix by setting
$z_t^{L}=x_t^{0}$ for all $t<K$.
We then sample $\mathbf{z}^{\tau-1}$ from $\mathbf{z}^{\tau}$ for $\tau=L,\ldots,1$.

Let $f_\theta$ be a Temporal U-Net denoiser that takes $(\mathbf{z}^{\tau},\tau,\mathbf{c})$
and outputs a sequence of noise predictions
$\hat{\boldsymbol{\varepsilon}}^{\tau}=f_\theta(\mathbf{z}^{\tau},\tau,\mathbf{c})\in\mathbb{R}^{T\times d}$.
We denote by $\hat{\varepsilon}^{\tau}_{t}=[\hat{\boldsymbol{\varepsilon}}^{\tau}]_t$
the $t$-th slice (time window) of this output and only use it on suffix positions:
\begin{equation}
\hat{\varepsilon}^{\tau}_{t}=\big[f_\theta(\mathbf{z}^{\tau},\tau,\mathbf{c})\big]_t,\qquad t\ge K.
\end{equation}

For $t\ge K$, we form the implied clean estimate
\begin{equation}
\label{eq:z0_pred}
\hat{x}_{t}^{0}
=
\frac{1}{\sqrt{\bar{\alpha}_\tau}}
\left(
z_{t}^{\tau}-\sqrt{1-\bar{\alpha}_\tau}\,\hat{\varepsilon}_{t}^{\tau}
\right),
\end{equation}
and use the standard DDPM reverse mean/variance:
\begin{equation}
\label{eq:reverse_mean_var_z}
\mu_{\theta,t}^{\tau}
=
\frac{1}{\sqrt{\alpha_\tau}}
\left(
z_{t}^{\tau}
-
\frac{\beta_\tau}{\sqrt{1-\bar{\alpha}_\tau}}\,
\hat{\varepsilon}_{t}^{\tau}
\right),
\qquad
\tilde{\beta}_\tau
=
\beta_\tau\cdot \frac{1-\bar{\alpha}_{\tau-1}}{1-\bar{\alpha}_{\tau}}.
\end{equation}

Crucially, the reverse transition is defined over the entire $\mathbf{z}$ with hard prefix constraints:
\begin{equation}
\label{eq:reverse_z_factor}
p_\theta(\mathbf{z}^{\tau-1}\mid \mathbf{z}^{\tau},\tau,\mathbf{c},K)
=
\prod_{t=0}^{K-1}\delta\!\left(z_t^{\tau-1}-x_t^{0}\right)
\;\cdot\;
\prod_{t=K}^{T-1}\mathcal{N}\!\left(
z_t^{\tau-1};\ \mu_{\theta,t}^{\tau},\ \tilde{\beta}_\tau I_d
\right).
\end{equation}
The final generated future is given by the suffix of $\mathbf{z}^{0}$, i.e.,
$\mathbf{x}_{K:T-1}^{0}=\mathbf{z}_{K:T-1}^{0}$.
For completeness, pseudocode for the prefix-clamped forward noising and reverse sampling procedures is provided in Algorithms \ref{alg:pc_forward_noising} and \ref{alg:pc_traj_diffusion_sampling}.

\begin{algorithm}
\caption{Forward Noising for Prefix-Conditional Diffusion}
\label{alg:pc_forward_noising}
\begin{algorithmic}[1]
\Require Clean trajectory $\mathbf{x}_{0:T-1}^{0}$, prefix length $K$, diffusion step $\tau\in\{1,\ldots,L\}$, $\bar{\alpha}_\tau$, state dimension $d$
\Ensure Noised variable $\mathbf{z}^{\tau}=(\mathbf{x}_{0:K-1}^{0},\mathbf{x}_{K:T-1}^{\tau})$
\State $\mathbf{z}^{\tau}_{0:K-1} \gets \mathbf{x}^{0}_{0:K-1}$ \Comment{clamp prefix (no noise)}
\State $\boldsymbol{\varepsilon}_{K:T-1}\sim \mathcal{N}(\mathbf{0},I_d)$ \Comment{$(T\!-\!K)\times d$ suffix noise}
\State $\mathbf{z}^{\tau}_{K:T-1} \gets \sqrt{\bar{\alpha}_\tau}\,\mathbf{x}^{0}_{K:T-1}+\sqrt{1-\bar{\alpha}_\tau}\,\boldsymbol{\varepsilon}_{K:T-1}$
\State \Return $\mathbf{z}^{\tau}$
\end{algorithmic}
\end{algorithm}

\begin{algorithm}
\caption{Prefix-Conditional Trajectory Diffusion Sampling}
\label{alg:pc_traj_diffusion_sampling}
\begin{algorithmic}[1]
\Require Observed prefix $\mathbf{x}_{0:K-1}^{0}$, context $\mathbf{c}$,
        diffusion steps $L$,
        schedule $\{\alpha_\tau,\beta_\tau,\bar{\alpha}_\tau,\tilde{\beta}_\tau\}_{\tau=1}^{L}$,
        denoiser $f_\theta$
\Ensure Sampled future suffix $\mathbf{x}_{K:T-1}^{0}$

\Statex \textbf{Initialize (clamp prefix, randomize suffix).}
\For{$t \gets 0$ \textbf{to} $K-1$}
    \State $z_t^{L} \gets x_t^{0}$ \Comment{hard prefix}
\EndFor
\For{$t \gets K$ \textbf{to} $T-1$}
    \State $z_t^{L} \sim \mathcal{N}(0,I_d)$ \Comment{noisy suffix}
\EndFor

\Statex \textbf{Reverse diffusion (denoise suffix with hard prefix).}
\For{$\tau \gets L, L-1, \ldots, 1$}
    \State $\mathbf{z}^{\tau} \gets \mathrm{Clamp}_K(\mathbf{z}^{\tau};\mathbf{x}_{0:K-1}^{0})$
    \State $\hat{\boldsymbol{\varepsilon}}^{\tau} \gets f_\theta(\mathbf{z}^{\tau},\tau,\mathbf{c})$ \Comment{$\hat{\boldsymbol{\varepsilon}}^{\tau}\in\mathbb{R}^{T\times d}$}

    \For{$t \gets K$ \textbf{to} $T-1$}
        \State $\hat{\varepsilon}^{\tau}_{t} \gets [\hat{\boldsymbol{\varepsilon}}^{\tau}]_{t}$ \Comment{$t$-th slice (time window)}
        \State $\mu_{\theta,t}^{\tau} \gets \frac{1}{\sqrt{\alpha_\tau}}\Big(
            z_{t}^{\tau}
            - \frac{\beta_\tau}{\sqrt{1-\bar{\alpha}_\tau}}\,\hat{\varepsilon}_{t}^{\tau}
        \Big)$
        \If{$\tau>1$}
            \State $z_t^{\tau-1} \sim \mathcal{N}(\mu_{\theta,t}^{\tau},\tilde{\beta}_\tau I_d)$
        \Else
            \State $z_t^{0} \gets \mu_{\theta,t}^{1}$ \Comment{final step: no noise}
        \EndIf
    \EndFor

    \State $\mathbf{z}^{\tau-1} \gets \mathrm{Clamp}_K(\mathbf{z}^{\tau-1};\mathbf{x}_{0:K-1}^{0})$ \Comment{maintain hard prefix}
\EndFor

\State \Return $\mathbf{x}_{K:T-1}^{0} \gets \mathbf{z}_{K:T-1}^{0}$
\end{algorithmic}
\end{algorithm}

\begin{figure}[!tb]
    \centering
    \includegraphics[width=\linewidth]{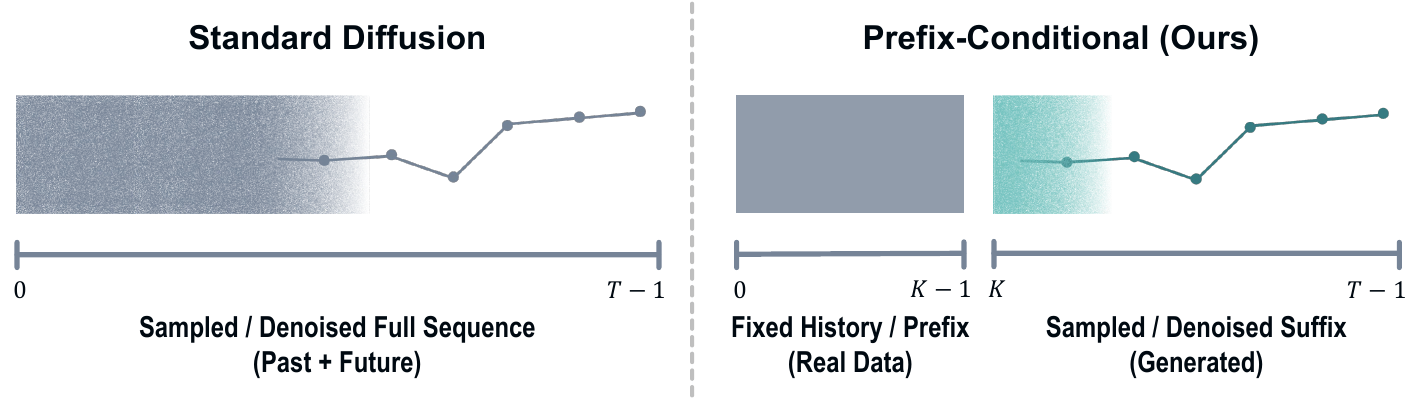}
    \caption{Comparison of standard trajectory diffusion and prefix-conditional diffusion.}
    \label{fig:pc_diffusion_concept}
\end{figure}

\subsubsection{Mask-Normalized Denoising Loss (MNDL)} A practical challenge in prefix-conditioned training is that the number of supervised
positions varies across examples and minibatches due to (i) different prefix lengths $K$
and (ii) padding. Without accounting for this variation, the denoising loss scale becomes
coupled to the effective suffix length, causing large loss/gradient fluctuations and
destabilizing optimization. To address this, we propose the
MNDL, which normalizes the denoising error by the number of valid (unmasked) suffix positions.

For a minibatch of $N$ trajectories $\{\xi^{n}\}_{n=1}^{N}$, let $m_{n,t}\in\{0,1\}$ be a binary mask indicating whether timestep $t$ in trajectory $\xi^{n}$ is a valid suffix position (i.e., $t\ge K$ and not padded). We set $m_{n,t}=0$ for the fixed prefix $t<K$ and for any padded timesteps.
Noise variables $\varepsilon_{n,t}$ are sampled only for suffix positions ($t\ge K$); the deterministic prefix contributes neither noise nor gradients.
We optimize
\begin{equation}
\label{eq:d3_mask_norm_denoise}
\mathcal{L}_{\mathrm{diff}}(\theta)
=
\mathbb{E}\left[
\frac{
\sum_{n=1}^{N}\sum_{t=0}^{T-1} m_{n,t}\,
\left\|\big[f_\theta(\mathbf{z}_{n}^{\tau},\tau,\mathbf{c}_{n})\big]_t-\varepsilon_{n,t}\right\|_2^2
}{
\sum_{n=1}^{N}\sum_{t=0}^{T-1} m_{n,t}
}
\right].
\end{equation}

MNDL decouples the loss magnitude from the effective horizon, substantially reducing minibatch-to-minibatch variance when $K$ varies or when trajectories contain padding.

\subsection{Context-Conditioned Inverse Dynamics}
\label{sec:d3_inv_dyn}
Rather than directly diffusing actions, we infer the city-level control via inverse dynamics. This design is motivated by city-scale subsidy control, where the same state transition can correspond to different actions under different operating targets and constraint regimes.
For example, a transition that is desirable under a growth-oriented target may be infeasible (or suboptimal) under a tighter subsidy cap. As a result, an unconditional inverse dynamics model $g(z_{t-2:t+1}^{0})$ tends to average across heterogeneous objectives and produces actions that are poorly calibrated for deployment-time steering.

We therefore propose a context-conditioned inverse dynamics decoder that treats the operating target and constraint regime as an explicit task specification.
Let $\mathbf{c}$ encode city-level embedding, temporal covariates (e.g., time-of-day and day-of-week), budget regime, and target \texttt{Rides} preference. We decode the city-level control as
\begin{equation}
\hat{\lambda}_t
=
g_\phi(z_{t-2}^{0},z_{t-1}^{0},z_t^{0},z_{t+1}^{0},\mathbf{c}).
\end{equation}
Here $z_t^{0}$ denotes the clean (final) prefix-clamped trajectory at diffusion step $\tau=0$: during training, $z_t^{0}$ is constructed from the ground-truth trajectory, whereas at inference it is obtained from the sampled trajectory $\mathbf{z}^{0}$ produced by the diffusion prior.

This conditioning makes the decoder controllable: varying $\mathbf{c}$ at deployment enables the same diffusion prior to yield different valid controls consistent with the desired objective--constraint trade-off.
We train $g_\phi$ with a mask-normalized mean-squared error
\begin{equation}
\mathcal{L}_{\mathrm{inv}}(\phi)
=
\mathbb{E}\left[
\frac{
\sum_{n,t} m^{\mathrm{inv}}_{n,t}\,
\|\hat{\lambda}_{n,t}-\lambda_{n,t}\|_2^2
}{
\sum_{n,t} m^{\mathrm{inv}}_{n,t}+\epsilon
}
\right],
\end{equation}
where $m^{\mathrm{inv}}_{n,t}$ masks padded timesteps in the $n$-th trajectory.

Finally, the decoded city-level control $\hat{\lambda}_t$ is converted into deployable pair-level subsidies by applying the closed-form dual mapping in Equation~\eqref{eq:b-optimal-clipped} (with $r_{ij}$ replaced by $r_{ij,t}$ for window $t$), i.e., $b_{ij,t}=b_{ij}^*(\hat{\lambda}_t)$.

\subsection{Constraint-Aware Score}
\label{sec:d3_awr}
To balance the primary KPI with strict subsidy-rate feasibility, we introduce a constraint-aware score defined on a trajectory $\xi$.
Let $\texttt{Rides}(\xi)$ be the number of \texttt{Rides} and let $C_{\mathrm{real}}(\xi)$ be the realized subsidy rate over $\xi$. Given a subsidy-rate cap $C\in(0,1)$ and a penalty exponent $\beta>0$, we define
\begin{equation}
\label{eq:score_def}
\mathrm{Score}(\xi)
=
\begin{cases}
\left(\dfrac{C}{C_{\mathrm{real}}(\xi)}\right)^{\beta}\, \texttt{Rides}(\xi),
& \text{if } C_{\mathrm{real}}(\xi) > C,\\[2mm]
\texttt{Rides}(\xi),
& \text{otherwise.}
\end{cases}
\end{equation}
When the subsidy-rate constraint is satisfied, the score equals the KPI. When the cap is violated, the score applies a smooth multiplicative penalty that increases with the degree of violation. The exponent $\beta$ controls the penalty strength and provides a simple knob to trade off \texttt{Rides} improvement against subsidy-rate compliance.

\subsection{Two-Stage Training}
\label{sec:d3_two_stage}
We first pretrain $(f_\theta,g_\phi)$ on pooled multi-city trajectories and condition both modules on context $\mathbf{c}$. This stage learns (i) a transferable diffusion prior over city dynamics and (ii) a generic inverse-dynamics decoder for the city-level control $a_t=\lambda_t$.

To improve deployment calibration on a target city $c^*$ using offline-only data, we apply parameter-efficient fine-tuning (PEFT): we freeze the diffusion model parameters and fine-tune only the inverse dynamics decoder. Let $(\theta_0,\phi_0)$ denote pretrained parameters. We optimize
\begin{equation}
\label{eq:ft_obj}
\min_{\phi}\quad
\mathcal{L}_{\mathrm{inv}}(\phi)
+
\lambda_{\text{anc}}\|\phi-\phi_0\|_2^2,
\end{equation}
where the anchor term acts as a trust-region surrogate to prevent excessive drift from the pretrained decoder. In practice, we use a smaller learning rate than in pretraining and update only $\phi$ while keeping $\theta=\theta_0$ fixed.

\section{Experiments}
\label{sec:exp}

\definecolor{OursGreen}{HTML}{ECF4F3}

\subsection{Experimental Setup}
\subsubsection{Dataset}
We compile an offline dataset from Didi broadcast logs to evaluate city-level subsidy control. It contains three aligned versions that use the same cities/days, state--action setup, and evaluation protocol, and differ only in the aggregation window (2/5/10 minutes). The data are stored as city-day trajectories. At each step, we aggregate city-level supply--demand signals and context, log the average online control $\lambda$, and compute key KPIs including \texttt{Rides}, \texttt{GMV}, and driver revenue (\texttt{DRV}). The dataset covers 133 Brazilian cities over 28 days; full statistics are in Table~\ref{tab:data_stats}.

\begin{table}[htbp]
  \caption{Data statistics under different time windows.}
  \label{tab:data_stats}
  \centering
  \small
  \begin{tabular}{cccc}
    \toprule
    \textbf{Parameters} & \textbf{2-min} & \textbf{5-min} & \textbf{10-min} \\
    \midrule
    Trajectories (city$\times$day) & 3,724 & 3,724 & 3,724 \\
    Cities / Days & 133 / 28 & 133 / 28 & 133 / 28 \\
    Trajectory length & 720 & 288 & 144 \\
    State dimension & 20 & 20 & 20 \\
    Action dimension & 1 & 1 & 1 \\
    Return-To-Go dimension & 1 & 1 & 1 \\
    Action range & (0, 30] & (0, 30] & (0, 30] \\
    \bottomrule
  \end{tabular}
\end{table}

\subsubsection{Baselines} We compare against the following baselines:
\begin{itemize}[leftmargin=*, itemsep=0pt]
    \item Online: Didi's production subsidy policy (predict-then-optimize).
    \item BC: behavior cloning---supervised policy learning from logged actions.
    \item BCQ \cite{fujimoto2019off}: offline RL that restricts actions to the dataset support (and nearby) to curb out-of-distribution (OOD) overestimation.
    \item CQL \cite{kumar2020conservative}: conservative offline RL that penalizes the $Q$-function to reduce overestimation.
    \item IQL \cite{kostrikov2022offline}: offline RL that improves BC via advantage-weighted regression without explicit OOD action maximization.
    \item TD3+BC \cite{fujimoto2021minimalist}: TD3 updates with a BC regularizer to limit deviation from logged behavior.
    \item DT (Decision Transformer) \cite{chen2021decision}: an autoregressive Transformer policy that generates actions from trajectory history conditioned on a target return-to-go.
    \item DD (Decision Diffuser) \cite{ajayconditional}: a diffusion-based generative policy that produces actions via iterative denoising, conditioned on the current context and a target return.
\end{itemize}

All offline RL baselines are trained to maximize the $\mathrm{Score}(\xi)$.
Therefore, the compared offline RL methods already correspond to soft-constrained policy learning under the subsidy-rate cap.

\subsubsection{Evaluation} 
For offline evaluation, we use the unified split described above. The main test set consists of 7 consecutive days from 3 cities (City A, City B, and City C), yielding 21 city-day trajectories. The cold-start test set consists of all city-day trajectories from 3 additional held-out cities (City D, City E, and City F). All remaining trajectories are used for training. We report $\mathrm{Score}(\xi)$ as the primary metric and also report \texttt{Rides}, \texttt{GMV}, and \texttt{DRV}. All policies are evaluated via closed-loop rollouts in Didi's high-fidelity production simulator. Simulator fidelity is verified by replaying the production Online policy, achieving 7.49\% MAPE on daily \texttt{Rides} over the held-out trajectories.

At each time window, the policy maps the current city-level state to an action $\lambda_t$. The simulator applies $\lambda_t$ together with the broadcast context within the time window, and uses its internal predictive model to generate the resulting KPIs and the next state. Rolling out the full day yields a simulated city-day trajectory $\xi$, from which we compute $\mathrm{Score}(\xi)$ via Equation~\eqref{eq:score_def}. 
For each city, we report the mean $\mathrm{Score}(\xi)$ over its 7 test days for each method and compare algorithms using these city-level averages.
In production, we allow a small upper tolerance band and count a violation only if $C_{\text{real}}(\xi) > C+\delta$. We additionally report under-utilization as $\texttt{UnderGap}(\xi)=\max(0,C-C_{\text{real}}(\xi))$, where smaller is better.

\begin{table*}[t]
  \caption{Offline evaluation score. Scores are averaged over the 7 hold-out days. Best in \textbf{bold}, second-best \underline{underlined}.}
  \label{tab:score}
  \centering
  \small
  \begin{threeparttable}
  \begin{tabular}{c c | >{\columncolor{OursGreen}}c cccccccc}
    \toprule
    City & Time Window & D$^3$-Subsidy (Ours) & Online & BC & BCQ & CQL & IQL & TD3+BC & DT & DD \\
    \midrule
    \multirow{4}{*}{City A} 
      & 2-min   & \textbf{10802.72} & 10368.08 & 10441.40 & 10494.38 & 10382.75 & 10685.53 & 10638.74 & \underline{10784.55} & 10730.26 \\
      & 5-min   & \textbf{10824.43} & 10359.22 & 10324.27 & 10445.51 & 10513.69 & 10612.73 & 10452.29 & \underline{10618.92} & 10488.95 \\
      & 10-min  & \textbf{10712.05} & 10341.50 & 10423.46 & 10260.77 & 10659.68 & 10262.76 & \underline{10696.46} & 10658.45 & 10332.50 \\
      & Average & \textbf{10779.73} & 10356.27 & 10396.38 & 10400.22 & 10518.71 & 10520.34 & 10595.83 & \underline{10687.31} & 10517.24 \\
    \midrule

    \multirow{4}{*}{City B} 
      & 2-min   & \textbf{2008.16} & 1940.09 & 1910.12 & 1992.22 & 1981.83 & 1963.18 & 1981.78 & 1957.06 & \underline{1992.83} \\
      & 5-min   & \textbf{2000.00} & 1938.57 & 1911.20 & 1883.80 & 1958.27 & 1953.19 & 1958.20 & 1954.61 & \underline{1987.89} \\
      & 10-min  & \textbf{2018.06} & 1936.16 & 1938.49 & 1908.35 & 1967.94 & 1973.27 & \underline{1988.49} & 1984.76 & 1983.93 \\
      & Average & \textbf{2008.74} & 1938.27 & 1919.94 & 1928.12 & 1969.35 & 1963.21 & 1976.16 & 1965.48 & \underline{1988.22} \\
    \midrule

    \multirow{4}{*}{City C} 
      & 2-min   & \textbf{1736.80} & 1650.70 & 1628.70 & 1694.30 & 1669.96 & 1656.00 & 1686.09 & \underline{1722.55} & 1709.86 \\
      & 5-min   & \textbf{1792.48} & 1643.23 & 1613.38 & 1657.01 & 1661.04 & 1642.01 & 1650.17 & 1674.31 & \underline{1737.39} \\
      & 10-min  & \textbf{1718.31} & 1631.43 & 1670.16 & 1597.11 & 1689.27 & 1679.28 & 1649.69 & 1672.04 & \underline{1717.02} \\
      & Average & \textbf{1749.20} & 1641.79 & 1637.41 & 1649.47 & 1673.42 & 1659.10 & 1661.98 & 1689.63 & \underline{1721.42} \\
    \midrule

    \multicolumn{2}{c|}{Overall Average} 
      & \textbf{4845.89} & 4645.44 & 4651.24 & 4659.27 & 4720.49 & 4714.22 & 4744.66 & \underline{4780.81} & 4742.29 \\
    \bottomrule
  \end{tabular}

  \begin{tablenotes}[flushleft]
    \footnotesize
    \item Note: ``Average'' rows for each city represent the mean value over time windows; ``Overall Average'' is computed by averaging over all city samples.
  \end{tablenotes}
  \end{threeparttable}
\end{table*}

\subsection{Main Results}

Table~\ref{tab:score} summarizes the city-level results on the test split. 
D$^3$-Subsidy achieves the best performance across all cities and settings, delivering the highest overall average score and consistently outperforming the online strategy.
Compared with representative offline RL baselines, our gains remain consistent. In particular, DT is the strongest competitor among baselines on average, yet still falls behind D$^3$-Subsidy. The gain holds across all windows, suggesting robust city-level subsidy control under the same subsidy-rate constraint. Since the performance trends are consistent across temporal granularities, we use the 5-minute variant in the remaining experiments for ablations and in-depth analysis.

\subsection{Ablation Studies}
We elucidate the contribution of key design components in D$^3$-Subsidy through an ablation study by evaluating the following variants:
\begin{itemize}[leftmargin=*, itemsep=0pt]
    \item D$^3$-Subsidy-C: removes the Condition in the inverse dynamics module, predicts the control signal without trajectory-conditioned decoding.
    \item D$^3$-Subsidy-M: removes multi-city pretraining, training only on the target city's trajectories.
    \item D$^3$-Subsidy-P: removes the PEFT/fine-tuning stage and directly applies the pretrained backbone for inference.
\end{itemize}

The results are reported in Table~\ref{tab:ablation}. As shown, removing any key component leads to a clear degradation in score, validating that the performance gain of D$^3$-Subsidy is not driven by a single component. Removing conditioning (C), multi-city pretraining (M), and PEFT (P) reduces the score by 3.1\%, 4.9\%, and 4.3\%, respectively, with the largest drop from w/o multi-city pretraining.

\begin{table}[htbp]
  \caption{Ablation results in City C (5-min).}
  \label{tab:ablation}
  \centering
  \small
  \begin{tabular}{lcc}
    \toprule
    Model & Score & \textit{Compare} \\
    \midrule
    D$^3$-Subsidy   & \textbf{1792.48} & -- \\
    D$^3$-Subsidy-C & 1737.58 & $-3.1\%$ \\
    D$^3$-Subsidy-M & 1704.91 & $-4.9\%$ \\
    D$^3$-Subsidy-P & 1715.78 & $-4.3\%$ \\
    \bottomrule
  \end{tabular}
\end{table}

\subsection{In-depth Analysis}
\subsubsection{Statistical Comparison} 

To verify that the $\mathrm{Score}(\xi)$ gains are consistent rather than driven by a few trajectories, we perform paired significance tests against each baseline on the held-out set (3 cities $\times$ 7 days, $N{=}21$).
For each trajectory $\xi$, we compute $\Delta(\xi)=\mathrm{Score}_{\text{ours}}(\xi)-\mathrm{Score}_{\text{baseline}}(\xi)$ and perform a one-sided paired $t$-test with $H_0:\mathbb{E}[\Delta]\le 0$ and $H_1:\mathbb{E}[\Delta]>0$.
Table~\ref{tab:paired_ttest} reports the mean gain, 95\% confidence interval, and $t$-statistic.
D$^3$-Subsidy yields significant improvements over all baselines (all $p<0.05$), indicating that the advantage is robust and reproducible.

\begin{table}[htbp]
  \caption{Statistical significance of D$^3$-Subsidy improvements.}
  \label{tab:paired_ttest}
  \centering
  \small
  \begin{threeparttable}
  \begin{tabular}{lcccc}
    \toprule
    Baseline & Mean Diff. & 95\% Confidence Interval & $t_{20}$ & $p$-value \\
    \midrule
    Online   & $+225.28$ & $[120.79,\,329.78]$ & 4.50 & $<\!0.001$ \\
    BC       & $+256.01$ & $[155.05,\,356.96]$ & 5.29 & $<\!0.001$ \\
    BCQ      & $+210.18$ & $[108.35,\,312.02]$ & 4.31 & $<\!0.001$ \\
    CQL      & $+161.29$ & $[78.98,\,243.60]$  & 4.09 & $<\!0.001$ \\
    IQL      & $+136.31$ & $[69.55,\,203.08]$  & 4.26 & $<\!0.001$ \\
    TD3+BC   & $+185.40$ & $[102.08,\,268.73]$ & 4.64 & $<\!0.001$ \\
    DT       & $+123.01$ & $[53.81,\,192.21]$  & 3.71 & $<\!0.001$ \\
    DD       & $+134.21$ & $[17.30,\,251.12]$  & 2.39 & 0.013 \\
    \bottomrule
  \end{tabular}
  \end{threeparttable}
\end{table}

\subsubsection{Temporal Dynamics of KPI under Subsidy Control}
Beyond average $\mathrm{Score}(\xi)$, we further inspect how different algorithms shape intra-day operational dynamics under the same subsidy rate control. Using City C as a representative case, we visualize the time evolution of key KPIs at the 5-minute time window. 

As shown in Figure~\ref{fig:cumulative}, our method delivers steady daily improvements over the online strategy in \texttt{Rides}, \texttt{GMV}, and \texttt{DRV}, suggesting the overall gain does not rely on trade-offs among these indicators.
Notably, the advantage accumulates steadily over time within each day, suggesting a stable and sustained improvement rather than a chance lead.
To understand where the cumulative gain comes from, Figure~\ref{fig:per_slot} reports per-window KPI trajectories, showing that our method preserves a similar daily pattern to the online strategy while remaining consistently higher across most time windows.
Figure~\ref{fig:tr} visualizes the daily subsidy rate as deviation from the cap $C$ (reported in this form for confidentiality), where the shaded area indicates the allowed upper tolerance band.
Compared with the online strategy, D$^3$-Subsidy ends near $C$ (or slightly above) with no overshoot beyond $C+\delta$ on the held-out test days, while maintaining a consistently small $\texttt{UnderGap}$. Results for the other held-out test cities are provided in Appendix~\ref{sec:kpi_fig} (Figures~\ref{fig:cumulative_041}--\ref{fig:tr_213}).

\begin{figure}[!tb]
    \centering
    \begin{subfigure}[t]{0.49\linewidth}
        \centering
        \includegraphics[width=\linewidth]{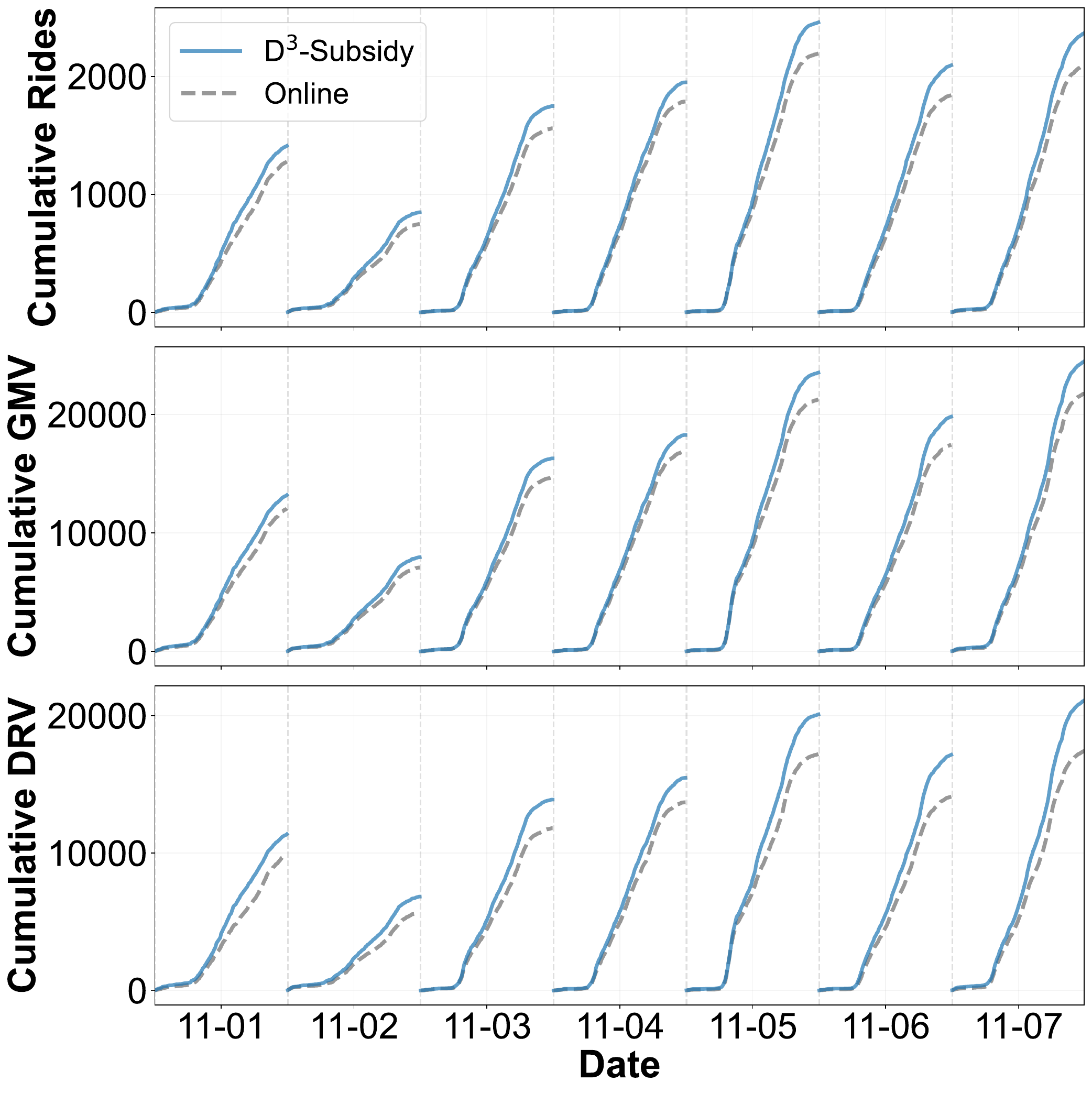}
        \caption{Cumulative}
        \label{fig:cumulative}
    \end{subfigure}\hfill
    \begin{subfigure}[t]{0.49\linewidth}
        \centering
        \includegraphics[width=\linewidth]{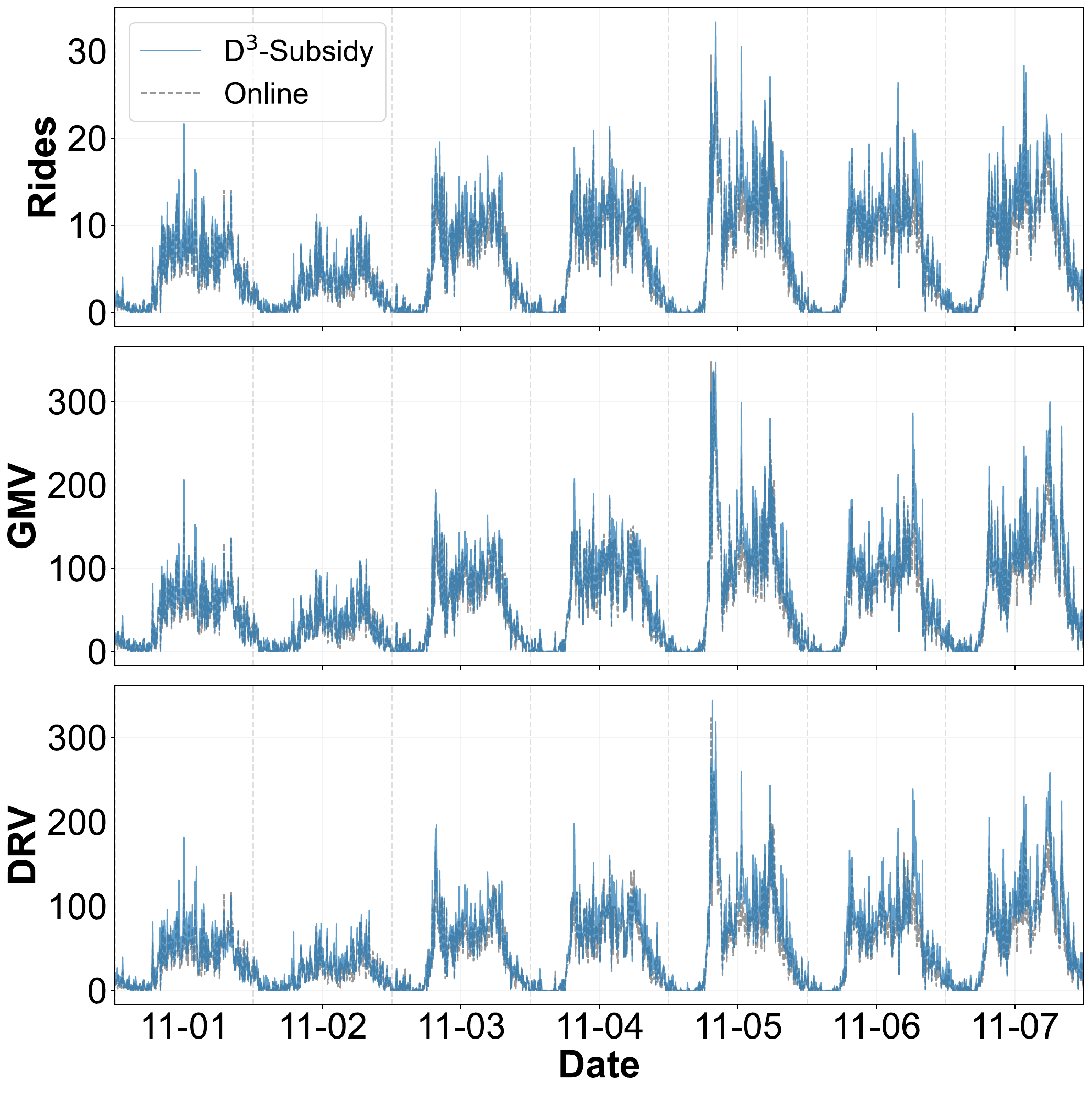}
        \caption{Per-window}
        \label{fig:per_slot}
    \end{subfigure}
    \caption{Intra-day \texttt{Rides}, \texttt{GMV} and \texttt{DRV} dynamics in City C.}
    \label{fig:kpi_dynamics_city_c}
\end{figure}

\begin{figure}[!tb]
    \centering
    \includegraphics[width=0.8\linewidth]{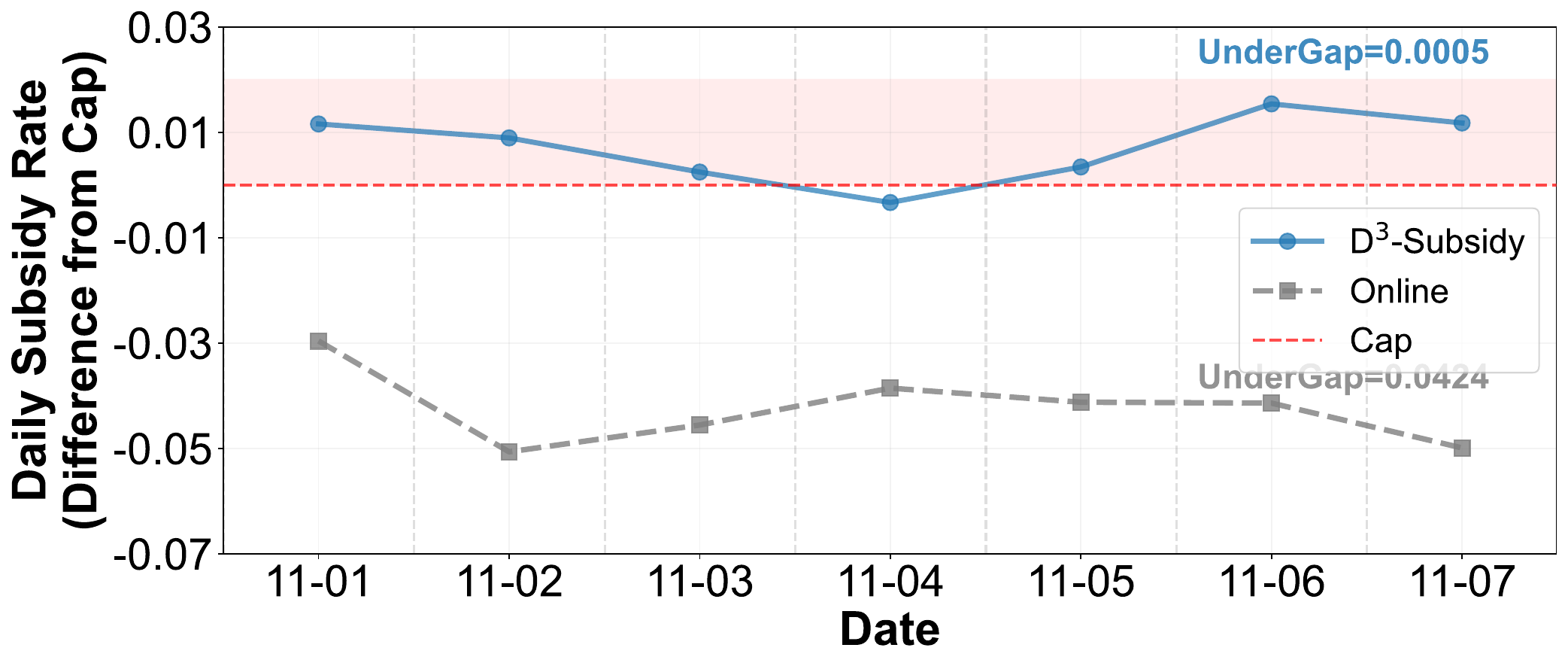}
    \caption{Daily Subsidy Rate in City C.}
    \label{fig:tr}
\end{figure}

\subsubsection{Target-KPI Controllability Analysis}
\label{sec:kpi_steering}

A practical requirement for subsidy control is deployment-time steerability: the platform should be able to shift operating targets by adjusting the target-KPI inputs without retraining.
To test this, we scale the target-KPI signals in the context by $\gamma \in \{0.2,0.4,\ldots,2.0\}$ while keeping the subsidy-rate constraint fixed, and roll out the resulting policies aggregated over the three held-out test cities.
As shown in Figure~\ref{fig:kpi_conditional_policy_steering}, varying $\gamma$ induces the expected shifts in realized Score, \texttt{Rides}, and \texttt{GMV}, indicating that the model converts target-KPI conditioning into controllable subsidy decisions and outcomes.

\begin{figure}[!tb]
    \centering
    \includegraphics[width=0.6\linewidth]{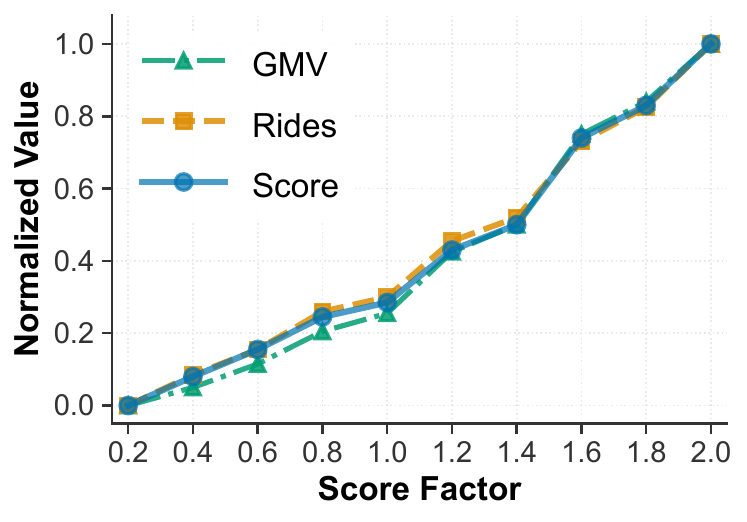}
    \caption{KPI-conditional policy steering.}
    \label{fig:kpi_conditional_policy_steering}
\end{figure}

\subsubsection{Stabilizing Diffusion Training via MNDL}

As shown in Figure \ref{fig:loss_compare_two_panels}, replacing the vanilla denoising loss with MNDL yields more stable optimization, and the loss curve accordingly becomes smoother with large fluctuations significantly reduced. MNDL normalizes the denoising error by the number of valid (unmasked) suffix steps, preventing the loss scale from drifting when the effective prediction horizon changes due to padding or varying prefix lengths. We also note that the diffusion loss may appear numerically larger after introducing MNDL. This increase is expected because the normalization denominator changes, so the absolute magnitudes across the two losses are not on the same scale. In practice, we focus on the improved stability and consistent convergence behavior rather than the raw loss values.

\begin{figure}[!tb]
  \centering
  \begin{subfigure}[t]{0.5\linewidth}
    \centering
    \includegraphics[width=\linewidth]{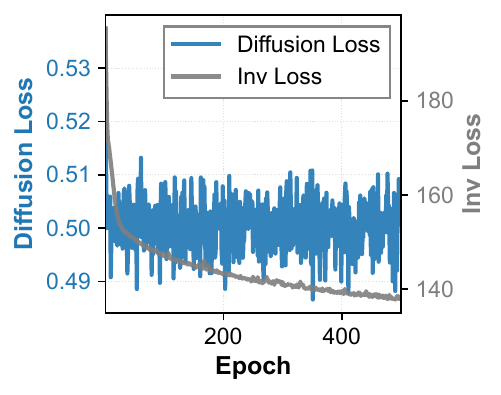}
    \caption{w/o MNDL}
    \label{fig:loss_wo_mndl}
  \end{subfigure}\hfill
  \begin{subfigure}[t]{0.5\linewidth}
    \centering
    \includegraphics[width=\linewidth]{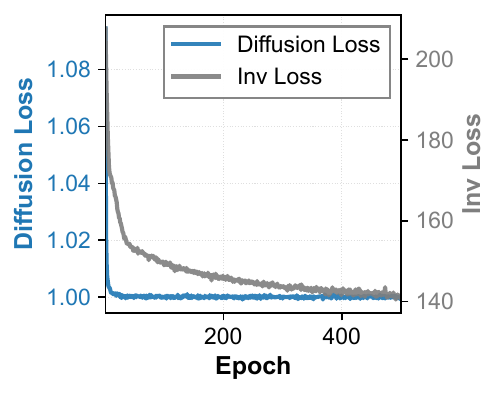}
    \caption{w/ MNDL}
    \label{fig:loss_w_mndl}
  \end{subfigure}

  \caption{Training loss comparison under different settings.}
  \label{fig:loss_compare_two_panels}
\end{figure}

\subsubsection{Cold-Start Performance}
\label{cold-start}

Cold-start deployment is common in practice because a new city may have no city-specific offline data at launch. We evaluate cold-start transfer on the three additional held-out cities whose all city-days are excluded from training. PEFT is not applicable in this setting since it requires target-city offline data to fine-tune the inverse-dynamics decoder.
Table~\ref{tab:cold_start} reports results on the 5-min setting, averaged over 7 consecutive days. D$^3$-Subsidy achieves the highest average score in the cold-start setting, outperforming both the online strategy and DT, which indicates stronger cross-city generalization.

\definecolor{OursGreen}{HTML}{ECF4F3}
\begin{table}[htbp]
  \caption{Offline evaluation score on cold-start cities. Best in \textbf{bold}, second-best \underline{underlined}.}
  \label{tab:cold_start}
  \centering
  \small
  \begin{threeparttable}
  \begin{tabular}{c | >{\columncolor{OursGreen}}c cc}
    \toprule
    City & D$^3$-Subsidy (Ours) & Online & DT \\
    \midrule
    City D & \textbf{532.20} & \underline{532.06} & 531.74 \\
    City E & \textbf{285.54} & \underline{277.44} & 266.66 \\
    City F & \textbf{1152.65} & \underline{1142.31} & 1132.58 \\
    \midrule
    Average & \textbf{656.80} & \underline{650.60} & 643.66 \\
    \bottomrule
  \end{tabular}
  \end{threeparttable}
\end{table}

\subsection{Sensitivity Analysis}
\subsubsection{Sensitivity to Reverse Diffusion Steps}
We sweep reverse diffusion steps $\tau \in \{10, 50, 100, 150\}$ in City C while keeping other settings fixed (Figure~\ref{fig:diff_steps}). Performance peaks at 50 steps: fewer steps degrade quality and more steps do not consistently help.

Overall, this sweep confirms that our default choice of 50 steps is both effective and reasonably robust to step variations.

\begin{figure}[htbp]
    \centering
    \includegraphics[width=0.6\linewidth]{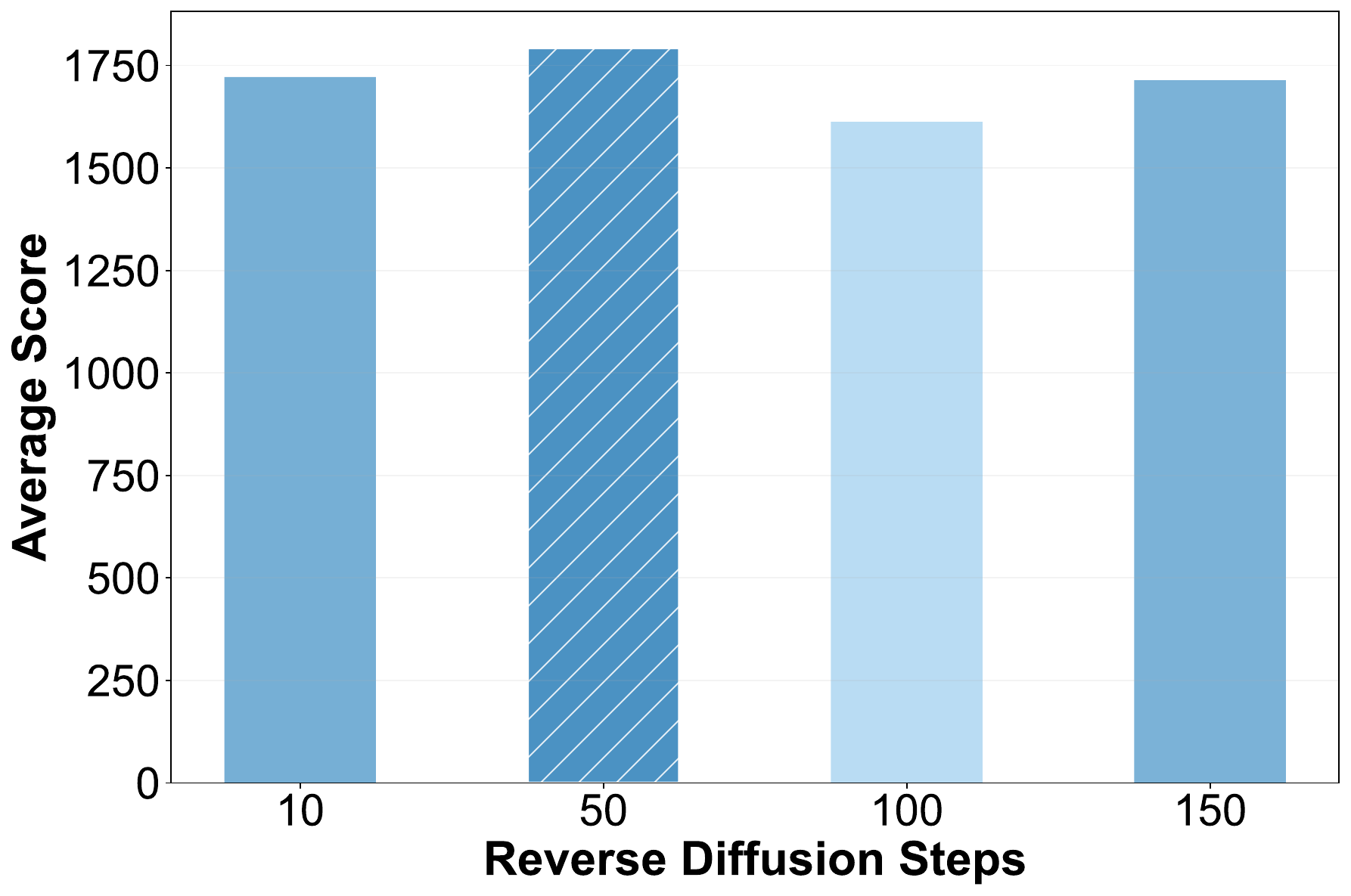}
    \caption{Score under different diffusion steps in City C.}
    \label{fig:diff_steps}
\end{figure}

\subsubsection{Sensitivity to the Penalty Exponent $\beta$}
Penalty exponent $\beta$ controls how strongly trajectories are penalized when the realized daily subsidy rate exceeds the target cap $C$. Table~\ref{tab:beta_sensitivity} reports results under $\beta\in\{0.5,1.0,2.0\}$ in City C. 
Notably, the online strategy stays below the cap in this city, so its score remains unchanged as $\beta$ varies. In contrast, D$^3$-Subsidy aims to closely track the target budget and may finish slightly above the soft cap on some days; therefore, increasing $\beta$ leads to a gradual decrease in score as expected. Importantly, D$^3$-Subsidy consistently outperforms the online strategy across all tested $\beta$ values, demonstrating that our improvement is robust and not driven by a particular choice of $\beta$.

\begin{table}[htbp]
    \centering
    \small
    \caption{Score sensitivity to penalty exponent $\beta$ in City C.}
    \label{tab:beta_sensitivity}
    \begin{tabular}{c|cc}
        \toprule
        Penalty exponent $\beta$ & D$^3$-Subsidy (Ours) & Online \\
        \midrule
        0.5 & 1792.48 & 1643.23 \\
        1.0 & 1747.28 & 1643.23 \\
        2.0 & 1662.15 & 1643.23 \\
        \bottomrule
    \end{tabular}
\end{table}

\subsection{Online A/B Test}
We further validate D$^3$-Subsidy via a 7-day online A/B test on Didi's production system (Feb 1--7, 2026) in a city, using the online strategy as the baseline. Orders are randomly assigned to the online strategy and D$^3$-Subsidy, with 50\% of orders allocated to each variant (202,279 broadcasts in total), and no other concurrent strategies or operational campaigns were running in the test scope. Both variants share the same subsidy-rate cap, traffic scope, and serving constraints. Across all 7 days, the subsidy-rate cap was never violated. Table~\ref{tab:ab_test} reports the relative lift on \texttt{Rides}, \texttt{GMV}, and \texttt{DRV}: D$^3$-Subsidy delivers consistent gains, confirming that offline improvements translate to real-market benefits while meeting the cap. Inference is operationally feasible: compared with the baseline, it adds around 20~ms per decision.

\begin{table}[htbp]
\centering
\caption{Online A/B Test Result.}
\label{tab:ab_test}
    \begin{tabular}{l|cccc}
    \toprule
    Metrics & \texttt{Rides}  & \texttt{GMV}  & \texttt{DRV} & Inference Time\\
    \midrule
    \textit{compare} & +1.59\% & +2.06\% & +2.31\%  & +20~ms \\
    \bottomrule
    \end{tabular}
\end{table}

\section{Related Work}

\noindent \textbf{Offline RL.} 
Offline RL is widely adopted in real-world systems where online exploration is prohibitively expensive or risky \cite{feng2024multi}. It learns policies from static logged data but suffers from distribution shift when generalizing beyond observed actions. Classical offline RL methods, such as BCQ \cite{fujimoto2019off} and IQL \cite{kostrikov2022offline}, mitigate this issue through behavior constraints or conservative value estimation. 
More recently, generative decision models have emerged as an alternative paradigm. \citet{chen2021decision} formulate offline RL as conditional sequence modeling using Transformers, enabling long-horizon decision making without explicit value functions. Diffusion models have also been introduced into offline RL; for example, \citet{hansen2023idql} combine diffusion-based generation with implicit Q-learning to improve robustness. 

\noindent \textbf{Diffusion Models for Decision Making.} 
Diffusion models provide strong generative priors for complex, multi-modal data and support conditional generation with diverse samples \cite{austin2021structured,chen2024overview}, making them attractive for decision modeling from offline data. Prior work has connected diffusion models to offline RL and policy learning \cite{wang2022diffusion,hansen2023idql}, as well as to behavior and trajectory modeling with temporal conditioning \cite{chenoffline,hu2023instructed}. Beyond RL, diffusion-based models have been applied to constrained decision-making and industrial planning, including autonomous driving and large-scale bidding systems \cite{zhengdiffusion,tan2025flow,guo2024generative,li2025generative}. 
In contrast, diffusion-based frameworks have not been explored for ride-hailing automated driver-subsidy control, where decisions must be made under non-stationary dynamics and subsidy-rate constraints.

\noindent \textbf{Automated Driver-Subsidy Control in Ride-Hailing.} 
Existing automated driver-subsidy control methods in ride-hailing primarily differ in subsidy granularity and decision scope. \citet{zhang2023short} study acquisition subsidies and show that they are most effective when one side of the two-sided market remains thin. \citet{zhu2021mean} formulate spatiotemporal subsidies as sequential control using a mean-field MDP and advocate zone-based surge subsidies over uniform payments. Moving to finer granularity, \citet{xie2023understanding} propose personalized incentives by estimating heterogeneous treatment effects. Yang et al.~\cite{yang2025decision} consider city-level budget allocation and propose a decision-focused learning framework that optimizes predictive models for subsidy allocation objectives. However, their formulation treats budget allocation as a static or single-stage decision problem and does not explicitly model the sequential MDP structure induced by budget consumption and evolving system dynamics.

\section{Conclusion}
In this paper, we propose D$^3$-Subsidy for automated driver-subsidy control decisions in the ride-hailing market, which can optimize the KPI and satisfy the subsidy-rate constraint at the same time. Extensive experiments on offline simulation and the  online A/B testing demonstrate the effectiveness of D$^3$-Subsidy. In the future, scholars can develop new methods to improve the efficiency and controllability of D$^3$-Subsidy. 
\newpage

\bibliographystyle{ACM-Reference-Format}
\bibliography{sample-base-arxiv}


\appendix

\section{Closed-Form Dual-Based Subsidy under a Linear Completion Model}
\label{app:dual-derivation}

We now present the complete derivation of the closed-form optimal subsidy under
the linear completion model introduced in the main text. The result is stated
as a lemma followed by its proof.

\begin{lemma}[Closed-Form Optimal Subsidy under Linear Completion]
\label{lemma:linear}
Suppose the completion probability is linear in the subsidy:
\[
p_{ij}(b_{ij}) = a_{ij} b_{ij}, \qquad a_{ij} > 0.
\]
Let \(C \in (0,1)\) be the global subsidy-rate cap. Consider the primal problem
\[
\begin{aligned}
\max_{b_{ij}} \quad &
\sum_{i,j} r_{ij} a_{ij} b_{ij},\\
\textnormal{s.t.}\quad &
\sum_{i,j} a_{ij} b_{ij}^{2}
-
(C+\delta) \sum_{i,j} r_{ij} a_{ij} b_{ij}
\le 0,\\
& 0 \le b_{ij} \le b_{\max(i)},\ \forall i,j.
\end{aligned}
\]
Let \(\lambda \ge 0\) be the Lagrange multiplier associated with the
subsidy-rate constraint. Then the optimal subsidy for each \((i,j)\)
under dual parameter \(\lambda\) (with \(\lambda>0\)) is
\[
b_{ij}^{*}(\lambda)
=
\min\big\{
\max\{0,\ \kappa r_{ij}\},
\ b_{\max(i)}
\big\},
\]
where
\[
\kappa
=
\frac{C + \delta + 1/\lambda}{2}.
\]
Letting \(\lambda^{*}\) denote the optimal dual multiplier, the final
dual-based optimal subsidy is
\[
b_{ij}^{*}
=
\min\big\{
\max\{0,\ \kappa r_{ij}\},
\ b_{\max(i)}
\big\}.
\]
\end{lemma}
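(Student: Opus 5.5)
The plan is to follow the standard Lagrangian-duality route: dualize only the subsidy-rate constraint, keep the box constraints explicit, solve the resulting separable inner problem in closed form, and then justify that evaluating at the optimal multiplier recovers a primal optimum.

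\textbf{Step 1 (Lagrangian).} With multiplier $\lambda\ge 0$ for the coupling constraint, I would write
\[
\mathcal{L}(b,\lambda)=\sum_{i,j} r_{ij}a_{ij}b_{ij}-\lambda\Big(\sum_{i,j}a_{ij}b_{ij}^2-(C+\delta)\sum_{i,j}r_{ij}a_{ij}b_{ij}\Big)
=\sum_{i,j}\Big[(1+\lambda(C+\delta))r_{ij}a_{ij}b_{ij}-\lambda a_{ij}b_{ij}^2\Big],
\]
which recovers Equation~\eqref{eq:lagrangian-linear}. The dual function is $g(\lambda)=\max_{0\le b_{ij}\le b_{\max(i)}}\mathcal{L}(b,\lambda)$. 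Because $\mathcal{L}$ is a sum of terms each depending on a single $b_{ij}$, and the box constraints are also per-pair, this inner maximization splits into independent scalar problems.

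\textbf{Step 2 (scalar problem).} Fix $\lambda>0$. Each summand
\[
\phi_{ij}(b)=(1+\lambda(C+\delta))r_{ij}a_{ij}b-\lambda a_{ij}b^2
\]
is strictly concave in $b$, since $a_{ij}>0$ and $\lambda>0$. Setting $\phi_{ij}'(b)=0$ gives the unconstrained maximizer
\[
b=\frac{(1+\lambda(C+\delta))r_{ij}}{2\lambda}=\frac{C+\delta+1/\lambda}{2}\,r_{ij}=\kappa r_{ij}.
\]
Note that $a_{ij}$ cancels. The maximizer of a strictly concave univariate function over an interval is the Euclidean projection of its unconstrained maximizer onto that interval. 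This yields $b^*_{ij}(\lambda)=\min\{\max\{0,\kappa r_{ij}\},b_{\max(i)}\}$, which is the first claim of Lemma~\ref{lemma:linear}, and the maximizer is unique.

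\textbf{Step 3 (dual optimum and strong duality).} This is the step I expect to be the main obstacle. Here I would argue that $b^*(\lambda^*)$ is primal optimal, where $\lambda^*\in\arg\min_{\lambda\ge0} g(\lambda)$. I would check convexity and a constraint qualification:
\begin{itemize}
\item \emph{Convexity.} The objective is linear, the coupling constraint is a convex quadratic (its Hessian is $\mathrm{diag}(2a_{ij})\succeq 0$), and the box is convex.
\item \emph{Slater's condition.} Assuming some $r_{ij}>0$, take $b_{ij}=\epsilon\min\{r_{ij},b_{\max(i)}\}$ for small $\epsilon>0$. The linear term $-(C+\delta)\sum r_{ij}a_{ij}b_{ij}=O(\epsilon)$ is strictly negative and dominates the $O(\epsilon^2)$ quadratic term, so this point lies strictly inside the coupling constraint and satisfies the box constraints.
\end{itemize}
Strong duality then holds and a dual optimum $\lambda^*$ exists. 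By the KKT saddle-point theorem, every primal optimum maximizes $\mathcal{L}(\cdot,\lambda^*)$ over the box. When $\lambda^*>0$, the uniqueness from Step 2 forces the primal optimum to equal $b^*(\lambda^*)$, which is the final formula with $\kappa$ evaluated at $\lambda^*$.

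The delicate case is $\lambda^*=0$, i.e.\ the cap is slack. Then $\mathcal{L}$ is linear in $b$ and the scalar problem is no longer strictly concave. I would handle it by noting that the optimum sets $b_{ij}=b_{\max(i)}$ whenever $r_{ij}>0$. This coincides with the formula interpreted as the limit $\kappa\to\infty$ as $\lambda\downarrow0$, consistent with the lemma's restriction to $\lambda>0$ for the explicit expression.
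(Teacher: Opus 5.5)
Your proposal is correct and follows the same route as the paper's proof: you form the same Lagrangian, split it into scalar problems, find the stationary point $\kappa r_{ij}$, and project it onto $[0,b_{\max(i)}]$. Your Step 3 goes beyond the paper, which substitutes $\lambda^*$ without justifying strong duality; your Slater/saddle-point argument properly closes that step, including the $\lambda^*=0$ case, provided you add two small points: compactness of the feasible set to guarantee a primal optimum exists, and $\max\{r_{ij},0\}$ in your Slater point so it stays in the box if some $r_{ij}<0$.
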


\begin{proof}
Under the linear completion model, the expected reward and total subsidy are
\[
R(b) = \sum_{i,j} r_{ij} a_{ij} b_{ij},
\qquad
C(b) = \sum_{i,j} a_{ij} b_{ij}^{2}.
\]
The subsidy-rate constraint \(C(b)/R(b)\le C\) is equivalent to
\[
\sum_{i,j} a_{ij} b_{ij}^{2}
-
(C + \delta) \sum_{i,j} r_{ij} a_{ij} b_{ij}
\le 0.
\]

Introduce a Lagrange multiplier \(\lambda \ge 0\). Using the standard form
\(\mathcal{L}(b,\lambda)=R(b)-\lambda\Big(C(b)-(C+\delta)R(b)\Big)\), the Lagrangian is
\[
\mathcal{L}(b,\lambda)
=
\sum_{i,j}
\big[
(1 + \lambda (C + \delta))\, r_{ij} a_{ij} b_{ij}
- \lambda a_{ij} b_{ij}^{2}
\big].
\]
Because the Lagrangian is separable across \((i,j)\), we solve the inner
maximization independently for each pair:
\[
\max_{0 \le b_{ij} \le b_{\max(i)}}
\big[
(1 + \lambda (C + \delta))\, r_{ij} a_{ij} b_{ij}
- \lambda a_{ij} b_{ij}^{2}
\big].
\]

Ignoring the box constraint temporarily, the first-order condition is
\[
\frac{\partial \mathcal{L}}{\partial b_{ij}}
=
(1 + \lambda (C + \delta))\, r_{ij} a_{ij}
- 2 \lambda a_{ij} b_{ij}
= 0.
\]
Solving for \(b_{ij}\) yields the stationary point
\[
b_{ij}^{\circ}(\lambda)
=
\frac{(1 + \lambda (C + \delta))}{2\lambda}\, r_{ij}
=
\frac{C + \delta + 1/\lambda}{2}\, r_{ij},
\]
and defining
\[
\kappa = \frac{C + \delta + 1/\lambda}{2}
\]
gives
\[
b_{ij}^{\circ}(\lambda)=\kappa r_{ij}.
\]

The feasible region is the interval \([0, b_{\max(i)}]\). Thus the optimal
solution is the projection of the stationary point onto this interval:
\[
b_{ij}^{*}(\lambda)
=
\min\big\{
\max\{0,\ \kappa r_{ij}\},
\ b_{\max(i)}
\big\}.
\]

The dual function is
\[
g(\lambda)
=
\sum_{i,j}
\max_{0 \le b_{ij} \le b_{\max(i)}}
\big[
(1 + \lambda (C + \delta))\, r_{ij} a_{ij} b_{ij}
- \lambda a_{ij} b_{ij}^{2}
\big],
\]
and the dual problem is \(\min_{\lambda \ge 0} g(\lambda)\).
Letting \(\lambda^{*}\) denote the optimal multiplier and substituting it into
the expression above yields
\[
b_{ij}^{*}
=
\min\big\{
\max\{0,\ \kappa r_{ij}\},
\ b_{\max(i)}
\big\},
\]
which completes the proof.
\end{proof}

\section{General Completion Models: Optimal Subsidy Characterization}
\label{app:lemma-general}

We now provide a general structural result for the dual-based subsidy when the
completion function $p_{ij}(b)$ is nonlinear (e.g., logistic or saturating).
The result shows that the optimal subsidy retains the same clipped form as in
the linear case, except that the interior optimum is determined implicitly by
a one-dimensional equation. This lemma serves as the theoretical foundation for
the general expression in the main text.

\begin{lemma}[Structure of Dual-Based Optimal Subsidy for General Completion Models]
\label{lemma:general-p}
Consider the primal subsidy optimization problem in the main text and assume
that the completion function $p_{ij}(b)$ satisfies:
\begin{itemize}
    \item[\textnormal{(A1)}] $p_{ij}$ is continuously differentiable on 
    $[0, b_{\max(i)}]$,
    \item[\textnormal{(A2)}] $p_{ij}'(b) \ge 0$ (monotonicity),
    \item[\textnormal{(A3)}] $p_{ij}''(b) \le 0$ (concavity),
    \item[\textnormal{(A4)}] $p_{ij}(0)=0$ and $p_{ij}(b_{\max(i)}) \le 1$.
\end{itemize}
Let $\lambda \ge 0$ be the Lagrange multiplier associated with the
subsidy-rate constraint, and define, for each $(i,j)$,
\[
F_{ij}(b;\lambda)
\triangleq
\big[(1 + \lambda (C + \delta))\, r_{ij} - \lambda b\big]\, p_{ij}'(b)
- \lambda\, p_{ij}(b).
\]
Then:

\begin{enumerate}
    \item There exists at most one $b_{ij}^{\circ}(\lambda) \in (0, b_{\max(i)})$
    satisfying
    \[
    F_{ij}(b_{ij}^{\circ}(\lambda);\lambda)=0.
    \]

    \item The dual-based optimal subsidy for each pair $(i,j)$ is given by
    \begin{equation}
    \label{eq:lemma-clipped}
    b_{ij}^{*}(\lambda)
    =
    \min\big\{
    \max\{0,\ \widehat{b}_{ij}(\lambda)\},
    \ b_{\max(i)}
    \big\},
    \end{equation}
    where $\widehat{b}_{ij}(\lambda)=b_{ij}^{\circ}(\lambda)$ if the root lies in
    $(0,b_{\max(i)})$, and any value outside this interval otherwise.
\end{enumerate}
\end{lemma}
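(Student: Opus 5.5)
The plan mirrors the proof of Lemma~\ref{lemma:linear}. Form the Lagrangian of the general primal problem,
\[
\mathcal{L}(b,\lambda)=\sum_{i,j}\Big[(1+\lambda(C+\delta))\,r_{ij}\,p_{ij}(b_{ij})-\lambda\, b_{ij}\,p_{ij}(b_{ij})\Big].
\]
Because it is separable, it suffices to study, for each pair $(i,j)$, the scalar function
\[
h_{ij}(b)=\big[(1+\lambda(C+\delta))r_{ij}-\lambda b\big]\,p_{ij}(b)
\]
on the interval $[0,b_{\max(i)}]$. By (A1), $h_{ij}$ is $C^1$. Differentiating gives
\[
h_{ij}'(b)=-\lambda p_{ij}(b)+\big[(1+\lambda(C+\delta))r_{ij}-\lambda b\big]p_{ij}'(b)=F_{ij}(b;\lambda).
\]
So stationary points of the inner problem are exactly the zeros of $F_{ij}(\cdot;\lambda)$.

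\textbf{Uniqueness (part 1).} The main step is to show that $F_{ij}$ is strictly decreasing on the region where a root can occur. Formally,
\[
F_{ij}'(b)=-2\lambda p_{ij}'(b)+\big[(1+\lambda(C+\delta))r_{ij}-\lambda b\big]p_{ij}''(b).
\]
This step is the main obstacle, because (A1) only gives $C^1$. I would therefore avoid second derivatives. On the set $\{b: A(b)\triangleq(1+\lambda(C+\delta))r_{ij}-\lambda b\ge 0\}$, the function $A(b)$ is nonnegative and decreasing, and $p'$ is nonnegative and nonincreasing by (A3). Hence $A\,p'$ is nonincreasing there, while $-\lambda p$ is nonincreasing by (A2). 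Thus $F_{ij}=h_{ij}'$ is nonincreasing, i.e.\ $h_{ij}$ is concave on that set.

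Any root with $A(b)<0$ would force
\[
F_{ij}=A\,p'-\lambda p<0
\]
whenever $p(b)>0$, so no root lies there. This leaves only the degenerate case $p\equiv 0$ on an initial segment.

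Strictness is needed for "at most one" root. I would obtain it from $\lambda>0$ and $p_{ij}'>0$ near the root, since then $-\lambda p$ is strictly decreasing. If $p'$ vanishes on an interval, uniqueness can fail. I would handle this by adding a mild nondegeneracy assumption, or by interpreting the statement as uniqueness of the maximizer, which suffices for part 2. For $\lambda=0$, $F=r_{ij}p'\ge 0$, so $h$ is nondecreasing and the maximizer is $b_{\max(i)}$. This is consistent with the clipped formula once $\widehat b$ is taken outside the interval.

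\textbf{Clipped form (part 2).} I would proceed by cases using the sign of $F$ and the concavity of $h_{ij}$. If a root $b^\circ\in(0,b_{\max(i)})$ exists, then $F\ge 0$ to its left and $F\le 0$ to its right. Hence $h_{ij}$ is maximized at $b^\circ$, which equals the clipped expression. If no interior root exists, $F$ has constant sign on $(0,b_{\max(i)})$ by continuity and the intermediate value theorem. If $F\le 0$ throughout, the maximizer is $0$, matching the clipped expression with any $\widehat b\le 0$. If $F\ge 0$ throughout, the maximizer is $b_{\max(i)}$, matching any $\widehat b\ge b_{\max(i)}$.

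Finally, exactly as in Lemma~\ref{lemma:linear}, I would substitute the optimal multiplier $\lambda^*$ from $\min_{\lambda\ge 0} g(\lambda)$ to obtain the dual-based subsidy. (A4) is used only to guarantee $p\in[0,1]$ and $h_{ij}(0)=0$ in the boundary comparison.
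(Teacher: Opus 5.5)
Your proof is correct, and it is substantially more complete than the paper's. Both share the same skeleton. You separate the Lagrangian, reduce to the scalar function $h_{ij}(b)=A(b)\,p_{ij}(b)$ with $A(b)=(1+\lambda(C+\delta))r_{ij}-\lambda b$, and identify $F_{ij}=h_{ij}'$.

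The difference is at the substantive step. The paper does not derive part (1) from (A1)--(A4). It asserts that ``for common completion curves (e.g., logistic or saturating)'' the equation $F_{ij}=0$ has at most one root. It then justifies the clipped form by saying the endpoint signs of $F_{ij}$ decide whether a boundary point is optimal, which tacitly assumes $h_{ij}$ is unimodal.

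You actually prove that unimodality. $F_{ij}$ is nonincreasing on the initial interval $\{A\ge 0\}$, being a product of nonnegative nonincreasing factors plus $-\lambda p_{ij}$, and $F_{ij}\le 0$ beyond it. This is exactly what makes ``project the stationary point onto $[0,b_{\max(i)}]$'' legitimate. You also avoid the $p''$ in (A3), which (A1) does not guarantee. Your sign-based case split also resolves the ambiguity in ``any value outside this interval'': one needs $\widehat b\le 0$ when $F_{ij}<0$ and $\widehat b\ge b_{\max(i)}$ when $F_{ij}>0$.

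The paper's route buys only brevity and an informal appeal to familiar curve families. A logistic curve is convex before its inflection point, so (A3) covers it only partially anyway. Your route buys a derivation from the stated hypotheses, plus the correct observation that part (1) fails as written in degenerate cases.

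One refinement: you do not need an extra nondegeneracy assumption when $\lambda>0$ and $p_{ij}\not\equiv 0$.

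\emph{Positivity and strict slope at a root.} Monotonicity, concavity and $p_{ij}(0)=0$ give $p_{ij}>0$ on $(0,b_{\max(i)}]$. If $p_{ij}(c)=0$ for some $c>0$, then $p_{ij}'=0$ on $[0,c]$, hence on all of $[0,b_{\max(i)}]$. So at any interior root, $A(b^\circ)\,p_{ij}'(b^\circ)=\lambda p_{ij}(b^\circ)>0$, which forces $p_{ij}'(b^\circ)>0$.

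\emph{No second root.} For $b^\circ<b_2$ with $A(b_2)\ge 0$, your monotonicity computation sharpens to
\[
F_{ij}(b^\circ)-F_{ij}(b_2)\;\ge\;\lambda\,(b_2-b^\circ)\,p_{ij}'(b^\circ)\;>\;0 .
\]
When $A(b_2)<0$, you already have $F_{ij}(b_2)<0$. So a second root is impossible.

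The only genuine exceptions to part (1) are therefore $p_{ij}\equiv 0$, and $\lambda=0$ with $p_{ij}'$ vanishing on a nondegenerate terminal subinterval. In those cases roots are non-unique, but each is still a maximizer, consistent with your treatment of part (2).
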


\begin{proof}
Fix $(i,j)$ and $\lambda \ge 0$. Using the standard Lagrangian form
\(\mathcal{L}=f-\lambda g\) with
\(f=r_{ij}p_{ij}(b)\) and \(g=b\,p_{ij}(b)-(C+\delta)\,r_{ij}p_{ij}(b)\),
the inner maximization problem decouples across $(i,j)$ and reduces to
\[
\max_{0 \le b \le b_{\max(i)}}\;
\Big( \big[(1 + \lambda (C+\delta))\, r_{ij} - \lambda b\big]\, p_{ij}(b) \Big).
\]

For an interior maximizer $b \in (0,b_{\max(i)})$, the first-order condition is
\[
\frac{d\mathcal{L}}{db}
=
\big[(1 + \lambda (C+\delta))\, r_{ij} - \lambda b\big]\, p_{ij}'(b)
- \lambda\, p_{ij}(b)
= 0,
\]
which is exactly $F_{ij}(b;\lambda)=0$.

Under (A1)--(A3), $p_{ij}$ is increasing and concave. For common completion
curves (e.g., logistic or saturating), the equation $F_{ij}(b;\lambda)=0$ admits
at most one root in $(0,b_{\max(i)})$, establishing statement (1).

To characterize the maximizer, observe that the sign of
$\frac{d\mathcal{L}}{db}$ at the endpoints determines whether an endpoint is
optimal; when neither endpoint is optimal, the (unique) root gives the interior
optimum. Therefore the optimal solution is obtained by projecting
$b_{ij}^{\circ}(\lambda)$ (if it exists) onto the feasible interval
$[0,b_{\max(i)}]$, which yields exactly the clipped form in
\eqref{eq:lemma-clipped}. This establishes statement (2) and completes the
proof.
\end{proof}

\section{Temporal Dynamics of KPIs in Additional Cities}
\label{sec:kpi_fig}

This appendix provides the temporal KPI trajectories for the two additional held-out test cities (City A and City B), complementing the City C case in the main text. For each city, we show (i) cumulative \texttt{Rides}/\texttt{GMV}/\texttt{DRV}, (ii) per-window \texttt{Rides}/\texttt{GMV}/\texttt{DRV} curves (5-minute), and (iii) the day-to-date subsidy rate as deviation from the cap $C$.

Figures~\ref{fig:cumulative_041}--\ref{fig:tr_041} correspond to City A and exhibit the same pattern as in City C: D$^3$-Subsidy delivers consistent gains across KPIs while keeping the realized subsidy rate tightly aligned with the cap under the same tolerance-band criterion.
Figures~\ref{fig:cumulative_213}--\ref{fig:tr_213} correspond to City B and confirm similar behavior in another held-out market, with steady KPI uplift and stable subsidy rate control throughout the day.

\begin{figure}[htbp]
    \centering
    \includegraphics[width=0.9\linewidth]{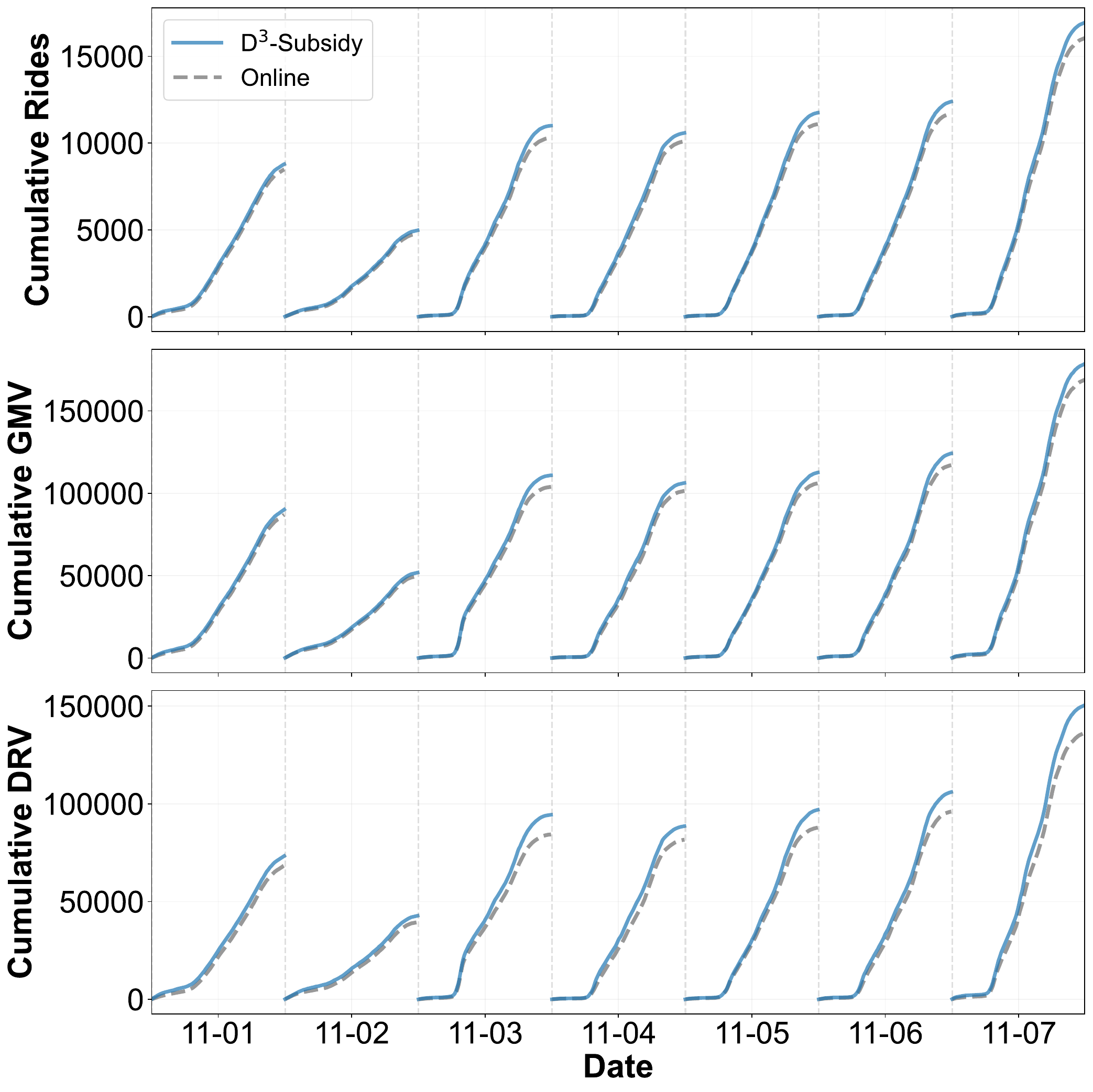}
    \caption{Cumulative \texttt{Rides}, \texttt{GMV} and \texttt{DRV} in City A.}
    \label{fig:cumulative_041}
\end{figure}

\begin{figure}[htbp]
    \centering
    \includegraphics[width=0.9\linewidth]{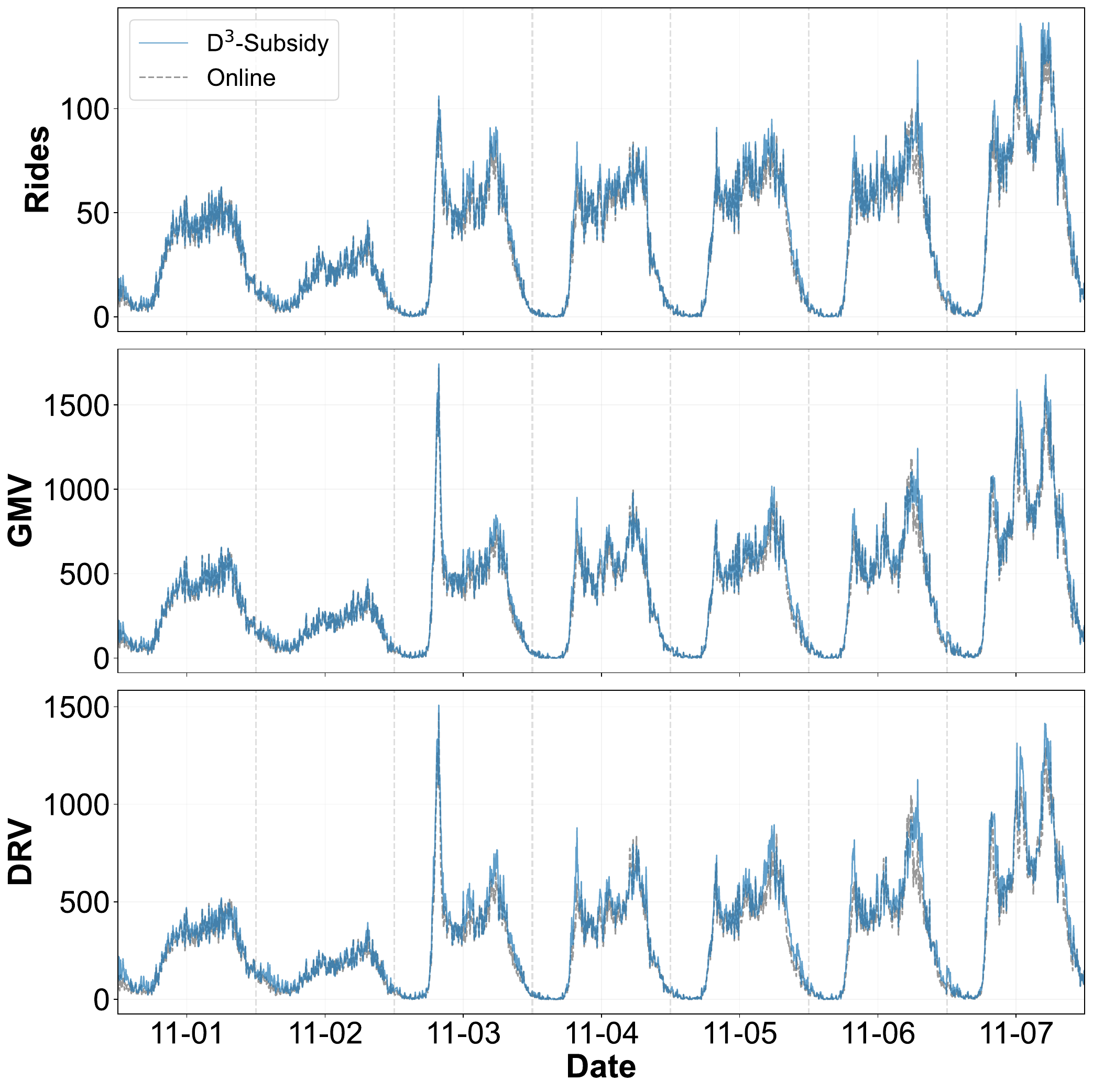}
    \caption{Per-Window \texttt{Rides}, \texttt{GMV} and \texttt{DRV} in City A.}
    \label{fig:per_slot_041}
\end{figure}

\begin{figure}[htbp]
    \centering
    \includegraphics[width=0.92\linewidth]{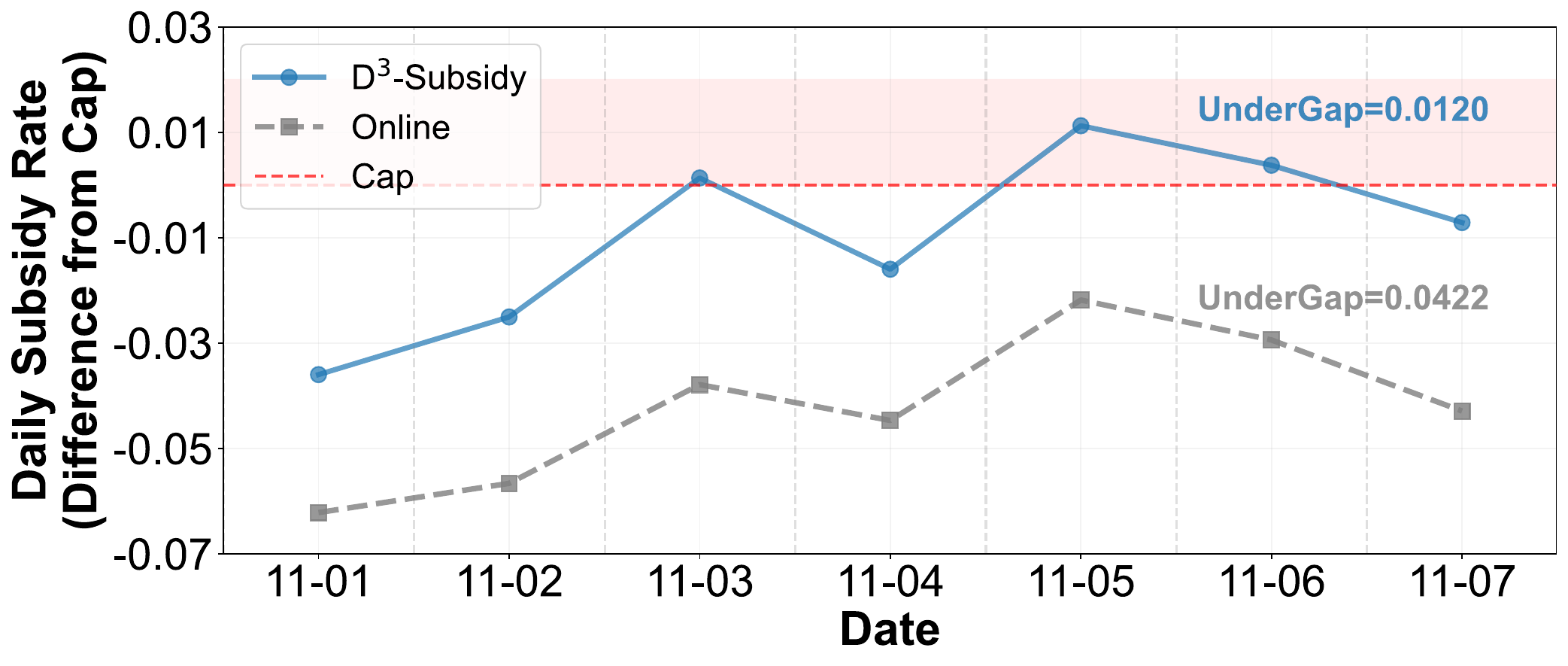}
    \caption{Daily Subsidy Rate in City A.}
    \label{fig:tr_041}
\end{figure}

\begin{figure}[htbp]
    \centering
    \includegraphics[width=0.9\linewidth]{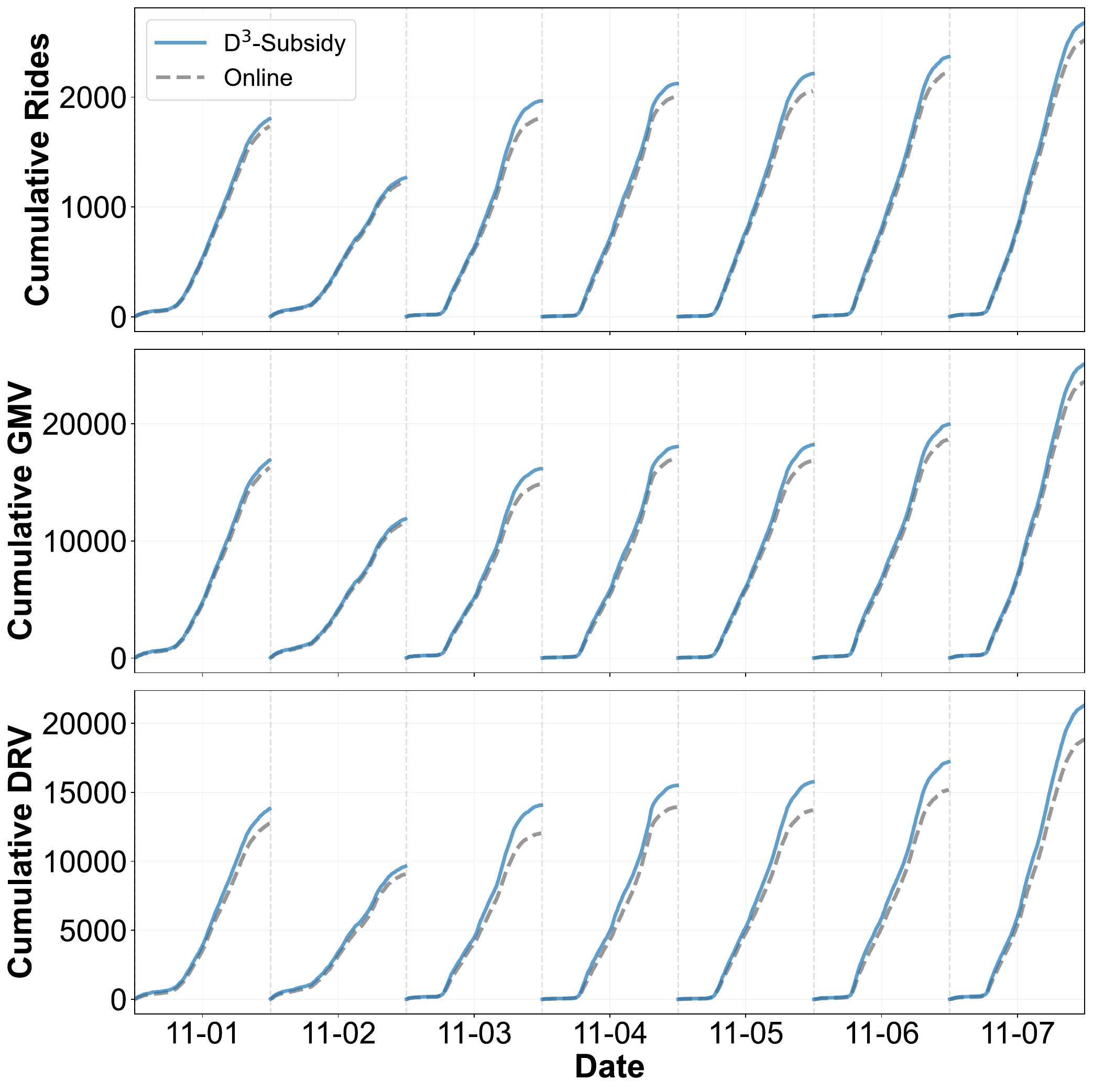}
    \caption{Cumulative \texttt{Rides}, \texttt{GMV} and \texttt{DRV} in City B.}
    \label{fig:cumulative_213}
\end{figure}

\begin{figure}[htbp]
    \centering
    \includegraphics[width=0.9\linewidth]{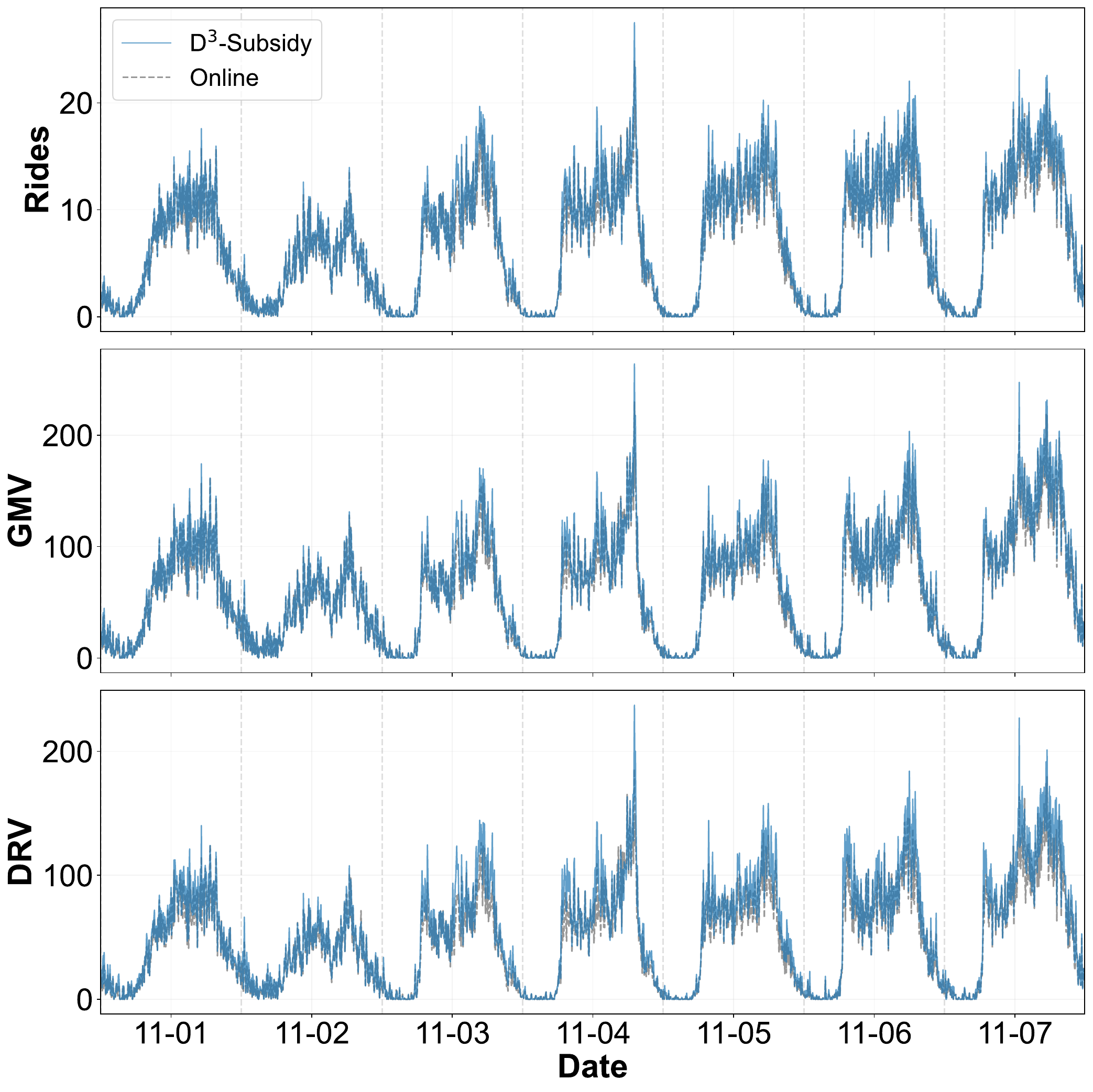}
    \caption{Per-Window \texttt{Rides}, \texttt{GMV} and \texttt{DRV} in City B.}
    \label{fig:per_slot_213}
\end{figure}

\begin{figure}[htbp]
    \centering
    \includegraphics[width=0.92\linewidth]{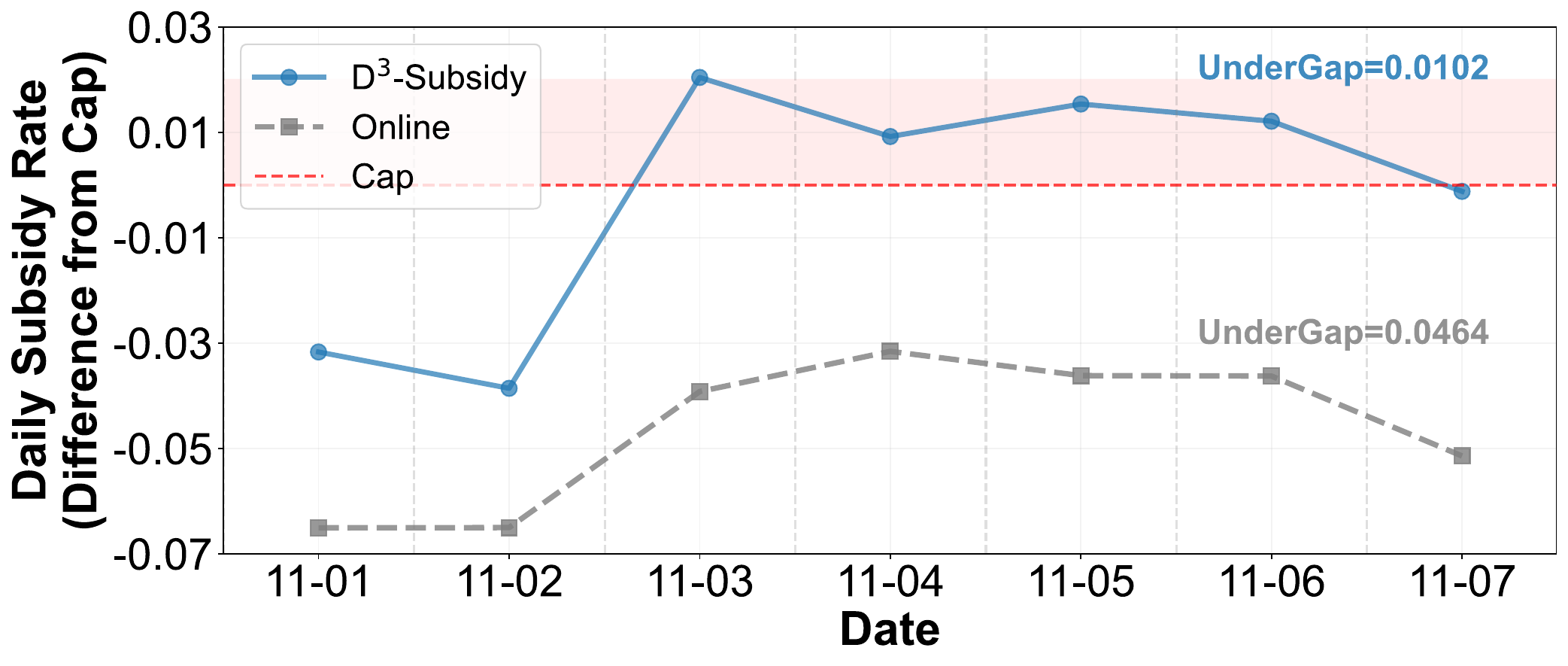}
    \caption{Daily Subsidy Rate in City B.}
    \label{fig:tr_213}
\end{figure}

\clearpage

\end{document}